\documentclass[10pt]{article}
\usepackage[preprint]{tmlr}
\usepackage[utf8]{inputenc}
\usepackage{amsmath,amssymb,amsthm,mathtools,bm}
\usepackage{booktabs,array,multirow}
\usepackage{graphicx}
\usepackage{float}
\usepackage{xcolor}
\usepackage{microtype}
\usepackage{enumitem}
\usepackage{url}
\usepackage{hyperref}
\usepackage{subcaption}
\usepackage[nameinlink,capitalise,noabbrev]{cleveref}
\usepackage{tikz}
\usepackage{xspace}
\usepackage{marvosym}
\usetikzlibrary{arrows.meta,positioning,fit}

\hypersetup{colorlinks=true,citecolor=blue!55!black,linkcolor=blue!55!black,urlcolor=blue!55!black}
\graphicspath{{figures/}}
\setlist[itemize]{leftmargin=*,topsep=2pt,itemsep=1pt}
\setlist[enumerate]{leftmargin=*,topsep=2pt,itemsep=1pt}

\newtheorem{proposition}{Proposition}
\newtheorem{lemma}{Lemma}
\newtheorem{definition}{Definition}
\newtheorem{assumption}{Assumption}

\newcommand{\E}{\mathbb{E}}
\newcommand{\cD}{\mathcal{D}}
\newcommand{\cR}{\mathcal{R}}
\newcommand{\cF}{\mathcal{F}}

\newcommand{\Risk}{\mathcal{L}}
\newcommand{\thetao}{\theta_{\mathrm{orig}}}
\newcommand{\thetar}{\theta_{\mathrm{retrain}}}
\newcommand{\thetau}{\theta_{\mathrm{unlearn}}}
\newcommand{\diff}{\operatorname{Diff}}
\newcommand{\norm}[1]{\left\lVert #1\right\rVert}
\newcommand{\abs}[1]{\left\lvert #1\right\rvert}
\newcommand{\avgdiff}{\textnormal{\textsc{Avg.\ Diff.}}\xspace}

\title{Do Influence-Derived Data Perturbations Enable Machine Unlearning?\\A Controlled Study of Three Plausible Roles}
\author{\name Chenkai Wu, \name Chrispine Kambimbi, \name Qinyang Zeng, \name Jun Yan (yanjun.ieee.org)\Letter \\ \addr Monash University, \addr LibrAI, \addr Tongji University, \addr Shanghai Ocean University}
\def\month{MM}
\def\year{YYYY}
\def\openreview{\url{https://openreview.net/forum?id=XXXX}}

\begin{document}
\maketitle

\begin{abstract}
We evaluate Deep Perturbation Learning (DPL), which perturbs training images and labels along influence-derived directions, in three roles in which prior work has positioned it for machine unlearning: a direct deletion signal (the strongest claim), a utility-preserving regularizer, and a warm start for adversarial unlearning. Evidence for the weaker roles has been used to support the stronger one, so we test each role separately under a matched protocol with exact-seed retraining baselines. An audit of the public implementation identifies two correctness issues: image directions are computed on augmented, normalized tensors but applied to raw images, and the label perturbation falls below float32 resolution, leaving labels unchanged. After correcting the image-perturbation pipeline, DPL fails the direct-deletion criterion on CIFAR-10/ResNet-18 in all three paired seeds. Its utility effects are inconsistent in sign across seeds, and once direction-computation time is counted it underperforms simple warm-start baselines. A one-seed Tiny ImageNet check likewise does not favor DPL as a regularizer or warm start; preprocessing inconsistencies in the released code make the direct comparison there inconclusive. These results cover random instance deletion only and do not rule out influence-based methods in other deletion regimes. We release a role-matched evaluation protocol and an audit checklist for perturbation-based deletion claims.
\end{abstract}

\section{Introduction}
\label{sec:intro}
Machine unlearning seeks to remove the influence of a designated subset of training data without paying the full cost of retraining \citep{cao2015towards,bourtoule2021machine}. For random instance deletion, exact retraining on the retained set is the relevant reference~\citep{ginart2019making,guo2020certified}. Because forgotten examples remain independent and identically distributed (i.i.d.) samples from the same distribution, a retrained classifier may still predict them correctly. Lower forget-set accuracy can therefore indicate a larger deviation from retraining rather than better unlearning.

Data perturbation changes the parameter trajectory through the inputs while leaving the architecture, objective, and optimizer unchanged. This can be useful when the training pipeline is fixed or the model is served in place. Deep Perturbation Learning (DPL)~\citep{song2023dpl} learns influence-derived image and label perturbations to improve supervised training. Its construction can be redirected at deletion: estimate the parameter displacement induced by removing a forget set $\cF$ and seek input perturbations whose gradient updates approximate it (\cref{sec:analysis}). However, the target displacement contains an inverse Hessian, and an input-space perturbation need not span the corresponding parameter-space direction.

This approximation could serve unlearning in three roles of decreasing strength, which we state as separate questions. First, can an influence-derived perturbation supply enough deletion pressure to act as a direct forgetting mechanism? Second, when an explicit forgetting objective is already present, can the perturbation preserve utility better than ordinary retained-data regularization? Third, can it initialize an adversarial-example unlearning method and reduce the cost of iterative attack generation? Each question requires its own control and its own predeclared pass/fail criterion, which we call a gate. An immediate increase in forget loss may disappear after model updating, preserved test accuracy may remain inconsistent with retraining under membership inference, and faster attack refinement may be offset by the cost of constructing the influence direction.

\begin{figure}[!t]
\centering
\begin{tikzpicture}[
  role/.style={draw,rounded corners,align=center,text width=34mm,minimum height=10mm,fill=blue!4},
  ctl/.style={draw,rounded corners,align=center,text width=34mm,minimum height=9mm,fill=gray!8},
  source/.style={draw,rounded corners,align=center,text width=42mm,minimum height=10mm,fill=green!7},
  arr/.style={-{Latex[length=2mm]},thick}
]
\node[source] (dir) at (0,0) {Influence-derived\\data direction};
\node[role] (direct) at (-4.2,-1.65) {Role I\\direct forgetting};
\node[role] (utility) at (0,-1.65) {Role II\\utility regularization};
\node[role] (warm) at (4.2,-1.65) {Role III\\attack warm start};
\node[ctl] (c1) at (-4.2,-3.25) {Control: retrain-centered\\baselines and no perturbation};
\node[ctl] (c2) at (0,-3.25) {Control: clean retain CE};
\node[ctl] (c3) at (4.2,-3.25) {Controls: official init\\and FGSM};
\draw[arr] (dir.south west) -- (direct.north);
\draw[arr] (dir.south) -- (utility.north);
\draw[arr] (dir.south east) -- (warm.north);
\draw[arr] (c1.north) -- (direct.south);
\draw[arr] (c2.north) -- (utility.south);
\draw[arr] (c3.north) -- (warm.south);
\end{tikzpicture}
\caption{Three plausible roles for an influence-derived data perturbation. Each role is evaluated against a simpler matched control. The study asks whether the perturbation contributes value beyond the surrounding optimization procedure.}
\label{fig:roles}
\end{figure}

\begin{figure}[!t]
\centering
\includegraphics[width=0.9\textwidth]{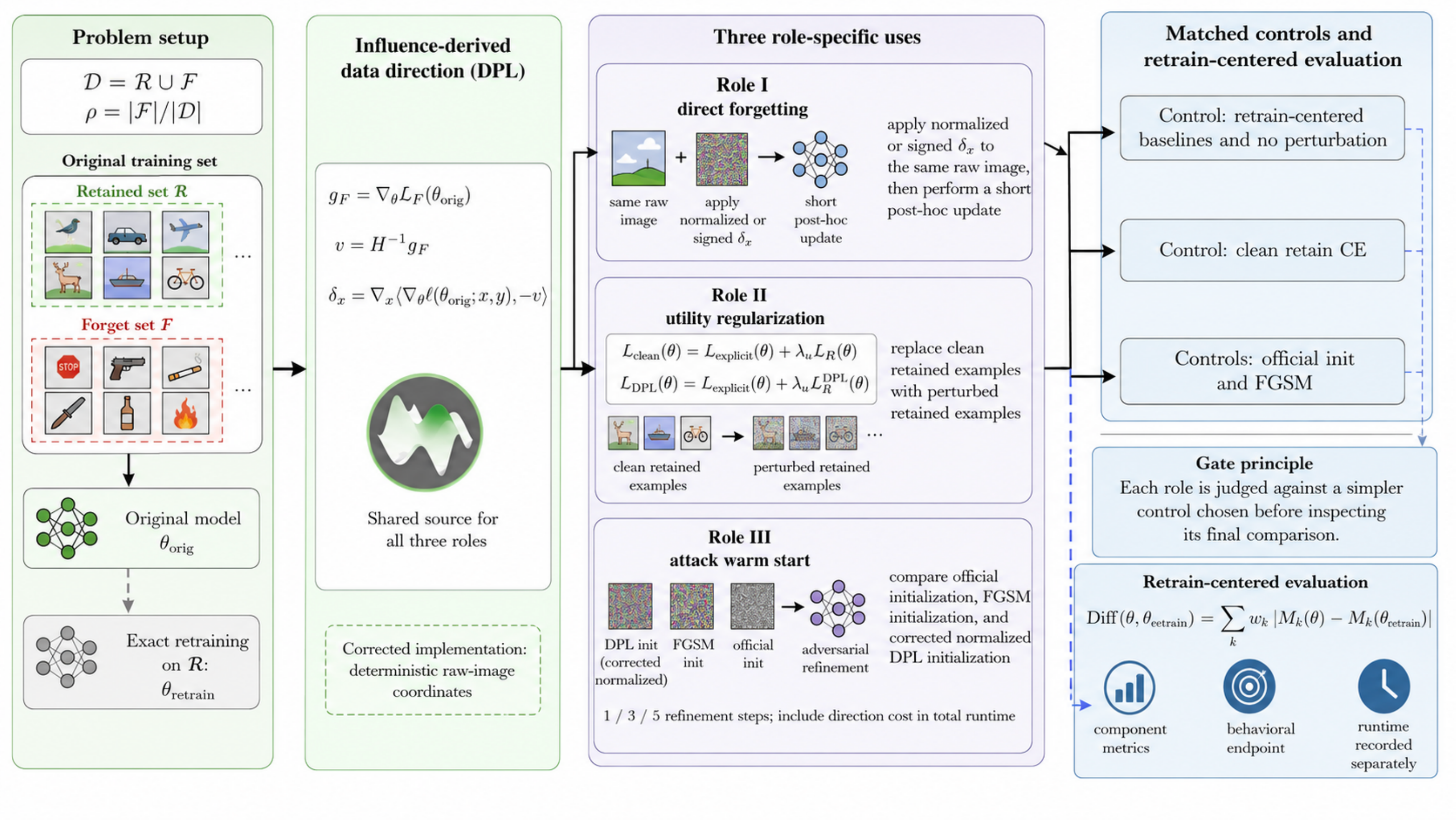}
\caption{Role-separated evaluation framework. A shared influence-derived direction is tested for direct forgetting, utility regularization, and attack warm starting, with exact retraining as the reference.}
\label{fig:framework}
\end{figure}

We separate the roles and test each against its matched baseline on CIFAR-10/ResNet-18 across three paired source/forget seeds (\cref{fig:roles}). Before any comparison, an audit of the implementation (\cref{sec:audit}) finds that image directions were computed on augmented, normalized tensors but applied to raw images, and that the label perturbation leaves float32 targets unchanged; we correct the image path and evaluate the corrected direction. For direct forgetting, the best image directions, selected optimistically over the diagnostic sweep, produce an immediate separation between forget and retain loss. This separation is either too small to matter or comes with substantial utility damage, the resulting model stays farther from retraining than either Saliency Unlearning (SalUn)~\citep{fan2024salun} or Adversarial Machine UNlearning (AMUN)~\citep{ebrahimpour2025amun}, and the direct-role gate fails in all three seeds. As a utility regularizer, DPL passes its gate in one of three seeds and does not reliably improve over clean retain-set cross-entropy. As a warm start, DPL behaves almost identically to a Fast Gradient Sign Method (FGSM) initialization~\citep{goodfellow2015explaining} while adding 47--66 seconds of direction-computation cost, and never beats both simple initializations under the full gate.

\Cref{fig:framework} shows the evaluation pipeline: one influence-derived direction is computed from the original model and tested in each role against its matched control, with exact retraining as the reference and task-specific metrics, behavioral outcomes, and computational cost reported throughout. A one-seed Tiny ImageNet~\citep{tinyimagenet} check with VGG-19 serves as an external implementation audit rather than a second decisive test; it exposes a preprocessing inconsistency in the official evaluation path that leaves the direct-role comparison there inconclusive.

This paper makes three contributions:
\begin{itemize}
    \item \textbf{Role-separated design and local analysis.} We assign a
    matched control and gate to each of the three roles and derive what an
    influence direction can and cannot support locally: why exact
    retraining retains high forget accuracy under random deletion, the
    first-order retraining displacement, the input direction that most
    improves a one-step parameter match, a condition under which a
    coordinate mismatch between where a direction is computed and where it
    is applied reverses the intended improvement, and why parameter
    alignment is only a surrogate for behavioral agreement.
    \item \textbf{Role-matched empirical evaluation.} With frozen
    configurations across three paired CIFAR-10 seeds, the corrected DPL
    direction fails the direct-deletion gate in every seed, passes the
    utility gate in one seed, and never beats both official and FGSM
    initialization as a warm start once direction cost is counted
    (\cref{sec:experiments}).
    \item \textbf{Implementation audit and checklist.} We identify a
    coordinate mismatch and a numerically inactive label perturbation in
    the direct path, a normalization/clamp inconsistency in the matched
    adversarial pipeline, and a Tiny ImageNet preprocessing defect that
    changes outcomes more than the choice of method (\cref{sec:audit}).
    \Cref{sec:takeaways} distills these into an audit checklist for
    perturbation-based deletion claims.
\end{itemize}
Our empirical claims are limited to the studied random-instance-deletion protocols. Within these protocols, a locally relevant direction is not enough: a proposed mechanism must match retraining at the behavioral endpoint of its claimed role, and the perturbation actually evaluated must be the one the mathematics describes.
\section{Related Work}
\label{sec:related}
\paragraph{Exact retraining and certified removal.}
Early work framed machine unlearning as the efficient deletion of data-dependent state \citep{cao2015towards,ginart2019making}. SISA limits the training influence of each sample through sharding, slicing, and aggregation, which reduces the amount of retraining required after deletion \citep{bourtoule2021machine}. Certified removal requires a distributional relation between the released model and a model trained without the deleted data \citep{guo2020certified}. Certified removal itself applies a Newton step, an inverse-Hessian influence update in parameter space, to convex models; subsequent work develops deletion guarantees through stability and optimization where the learning problem is tractable \citep{sekhari2021remember,neel2021descent,ullah2021stability}. Our study concerns post-hoc approximate unlearning for deep classifiers. We use exact retraining as the empirical reference, not as a guarantee attained by the evaluated methods.
\paragraph{Approximate unlearning for deep networks.}
The simplest approximate methods fine-tune on the retained set or ascend the forget loss \citep{graves2021amnesiac,thudi2022unrolling}; we use both as controls. Two more structured families modify forget-sensitive parameters or adopt teacher-student objectives. Weight scrubbing, SSD, and SalUn suppress or restrict updates to parameters associated with the forget set \citep{golatkar2020eternal,foster2024ssd,fan2024salun}, whereas Bad Teacher and SCRUB preserve retained knowledge while encouraging disagreement on forgotten samples \citep{chundawat2023bad,kurmanji2023unbounded}. Recent benchmarks and competition analyses show that method rankings are sensitive to datasets, forget sets, initializations, metrics, and baseline tuning \citep{zhao2024hard,triantafillou2024progress,cadet2025deep}, and that weak evaluations can produce a false sense of privacy \citep{hayes2024inexact}, motivating our use of fixed checkpoints, exact forget indices, matched evaluation, and simple controls.
\paragraph{Privacy evaluation.}
A low forget-set accuracy is neither necessary nor sufficient for random instance deletion. Membership inference instead tests whether the model distinguishes training members from nonmembers \citep{shokri2017membership}. Likelihood-ratio attacks provide competitive membership audits while reducing the number of required reference models~\citep{carlini2022membership,zarifzadeh2024rmia}. Recent work formalizes unlearning evaluation as a cryptographic game \citep{tu2025reliable} and audits per-sample efficacy with likelihood inference \citep{naderloui2025rectifying}; these measure aspects of unlearning that the statistics used here do not. We report the SVC-based confidence statistic from the AMUN codebase and RMIA FT-AUC when available, and interpret both relative to the exact retrained model: a numerically smaller value is not better if retraining has a different value.
\paragraph{Perturbation learning and adversarial unlearning.}
Influence functions approximate the effect of upweighting or removing training points through inverse-Hessian vector products \citep{koh2017influence}. In deep nonconvex networks their estimates can be fragile \citep{basu2021fragile}, approximate a proximal Bregman response rather than the retrained solution \citep{bae2022influence}, and depend on the iHVP solver \citep{klochkov2024ihvp}. Influence-based unlearning has applied these updates directly in parameter space \citep{guo2020certified,izzo2021approximate,warnecke2023features}. DPL instead uses related second-order information to learn image and label perturbations that improve supervised training \citep{song2023dpl}; its original objective is neither an unlearning objective nor adversarial minimax training, and to our knowledge it has not previously been evaluated for deletion. Input-space unlearning methods do exist: error-maximizing noise \citep{tarun2023fast} and boundary unlearning \citep{chen2023boundary} perturb inputs or decision boundaries to induce forgetting, and adversarial-example unlearning takes a related route. AMUN constructs nearby adversarial examples for the forget set and updates the model toward deliberately incorrect targets on those examples, thereby supplying explicit deletion pressure near the local decision boundary \citep{ebrahimpour2025amun}. Our work does not propose a stronger adversarial unlearning method. It asks whether an influence-derived direction is useful as a standalone update, as an auxiliary utility term, or as an initialization for adversarial unlearning.
\paragraph{Implementation-sensitive evaluation.}
Adversarial-robustness evaluation has documented how coordinate, normalization, and precision choices silently change the perturbation actually tested \citep{carlini2019evaluating,croce2020reliable}, and unlearning work has argued that deletion claims must be auditable at the level of the implemented algorithm \citep{thudi2022necessity}. Perturbation-based unlearning inherits both concerns: image coordinates, normalization units, projection domains, and numerical precision each change the realized update. We therefore evaluate the implemented perturbation through its end-to-end unlearning outcomes rather than agreement with an intermediate mathematical direction.
\section{Problem Setup and Three Perturbation Roles}
\label{sec:setup}
Let the original training set be $\cD=\cR\cup\cF$, where $\cF$ is the forget set and $\cR$ is the retained set. Write $\rho=|\cF|/|\cD|$. For a parameter vector $\theta$ and loss $\ell$, define the average empirical risks
\begin{equation}
\Risk_A(\theta)=\frac{1}{|A|}\sum_{z\in A}\ell(\theta;z),
\qquad A\in\{\cD,\cR,\cF\}.
\end{equation}
We abbreviate $\Risk_\cR$ and $\Risk_\cF$ as $\Risk_R$ and $\Risk_F$. The original model is $\thetao$, and $\thetar$ denotes an exact retraining run on $\cR$ under the matched training recipe. An approximate procedure returns $\thetau$.
\begin{definition}[Retrain-centered behavioral discrepancy]
Let $M_1,\ldots,M_K$ be behavioral statistics such as forget, retain, and test accuracy and membership-inference scores. For nonnegative weights $w_k$, define
\begin{equation}
\diff(\theta,\thetar)=\sum_{k=1}^{K}w_k\abs{M_k(\theta)-M_k(\thetar)}.
\label{eq:behavioral-diff}
\end{equation}
The exact implementation in our experiments uses the AMUN codebase's average-difference statistic. We report every component rather than relying on the aggregate alone. Computational cost is recorded separately and enters the role-specific gates. We call $\diff(\theta,\thetar)$ the \emph{average discrepancy} (\avgdiff) and use this name throughout.
\end{definition}
\paragraph{Role I: direct forgetting.}
Let $g_F=\nabla_\theta\Risk_F(\thetao)$ and let $H$ be a damped empirical Hessian. The tested influence vector is
\begin{equation}
 v=H^{-1}g_F.
\end{equation}
For a candidate example with input $x$, DPL forms a mixed derivative of the candidate gradient with the sign-reversed influence vector,
\begin{equation}
 \delta_x=\nabla_x\left\langle \nabla_\theta \ell(\thetao;x,y),-v\right\rangle.
\label{eq:dpl-direction}
\end{equation}
The corrected implementation computes this derivative in deterministic raw-image coordinates, applies a normalized or signed version of $\delta_x$ to the same raw image, and then performs a short post-hoc update. This role succeeds only if the resulting model has lower \avgdiff to $\thetar$ than simple post-hoc controls.
\paragraph{Role II: utility regularization.}
Here an explicit adversarial-example objective supplies the forgetting pressure. A utility term is added with weight $\lambda_u$:
\begin{align}
\Risk_{\mathrm{clean}}(\theta)&=\Risk_{\mathrm{explicit}}(\theta)+\lambda_u\Risk_R(\theta),\\
\Risk_{\mathrm{DPL}}(\theta)&=\Risk_{\mathrm{explicit}}(\theta)+\lambda_u\Risk_R^{\mathrm{DPL}}(\theta).
\end{align}
The second loss replaces clean retained examples with retained examples perturbed by a retain-derived influence direction. The matched control is ordinary clean retained-data cross-entropy. The DPL term succeeds only if it improves utility without weakening forgetting or privacy matching.
\paragraph{Role III: adversarial warm start.}
The adversarial unlearning pipeline generates a perturbed forget set by maximizing an attack objective under its repository-defined constraint. We compare official initialization, FGSM initialization, and corrected normalized DPL initialization at one, three, and five refinement steps. The DPL direction cost is included in total runtime. A warm start succeeds only if it reaches comparable unlearning quality faster than the official initialization and improves meaningfully over FGSM at the same step count.
\paragraph{Gate principle.}
Each role is judged against a simpler control chosen before inspecting its final comparison. We do not reinterpret a small gain on one metric as success when another required metric degrades. The concrete gates are given in \cref{sec:protocol} and the full decision tables appear in \cref{app:gates}.
\section{Theoretical Analysis}
\label{sec:analysis}
This section establishes what an influence-derived data direction can and cannot justify: why exact retraining keeps forget accuracy high under random deletion (\cref{prop:random-delete}), the first-order parameter displacement that deletion induces (\cref{prop:retrain-direction}), the input direction that best matches it in one step (\cref{prop:data-direction}), how coordinate errors break that match (\cref{prop:coord-error}), and why parameter alignment is only a surrogate for behavioral agreement (\cref{lem:lipschitz}). All results are local expansions around $\thetao$ under explicit regularity conditions; none is a global unlearning guarantee for a nonconvex network.
\subsection{Random deletion should resemble population prediction}
\begin{proposition}[Expected forget risk after random deletion]
\label{prop:random-delete}
Let $Z_1,\dots,Z_n$ be i.i.d.\ from a population $P$, and suppose
$\mathbb{E}_{Z\sim P}[|\ell(\theta;Z)|]<\infty$ for all $\theta$ under
consideration. Select a retained index set $I_R$ uniformly at random, independently of the sample values, and let $I_F$ be its complement. Let $\hat\theta_R$ be any measurable function of $\{Z_i:i\in I_R\}$. Then
\begin{equation}
\E\!\left[\Risk_F(\hat\theta_R)\mid I_R,\{Z_i:i\in I_R\}\right]
=\E_{Z\sim P}[\ell(\hat\theta_R;Z)].
\end{equation}
The same identity holds for any bounded statistic of the prediction, including accuracy.
\end{proposition}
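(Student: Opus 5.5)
The plan is to condition on the retained indices and values, and then use independence. Write $\mathcal{G}=\sigma(I_R,\{Z_i:i\in I_R\})$. Since $I_R$ is chosen independently of $(Z_1,\dots,Z_n)$, the complement $I_F$ is $\sigma(I_R)$-measurable and so is $|\cF|=|I_F|$. I will assume $|I_F|\ge 1$, which is implicit in defining $\Risk_F$. Then
\[
\Risk_F(\hat\theta_R)=\frac{1}{|I_F|}\sum_{i\in I_F}\ell(\hat\theta_R;Z_i),
\]
and linearity of conditional expectation reduces the claim to a single identity: for each $i$, on the event $\{i\in I_F\}$,
\[
\E[\ell(\hat\theta_R;Z_i)\mid\mathcal{G}]=\E_{Z\sim P}[\ell(\hat\theta_R;Z)].
\]
It is cleanest to prove this on each atom $\{I_R=S\}$ for a fixed subset $S$. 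This is legitimate because $I_R$ takes finitely many values.

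On the atom $\{I_R=S\}$, with $i\notin S$, I will argue as follows. Independence of $I_R$ from the sample means that conditioning on $\{I_R=S\}$ leaves the joint law of $(Z_1,\dots,Z_n)$ unchanged; it is still i.i.d.\ $P$. On that atom, $\hat\theta_R$ equals $f_S(Z_S)$ for a measurable function $f_S$, and $Z_i$ is independent of $Z_S$. The freezing (substitution) lemma for independent variables then gives
\[
\E[\ell(f_S(Z_S);Z_i)\mid Z_S]=h(f_S(Z_S)),\qquad h(\theta):=\E_{Z\sim P}[\ell(\theta;Z)].
\]
This step needs $\ell$ to be jointly measurable, and it needs $h(\theta)$ to be finite, which the hypothesis $\E|\ell(\theta;Z)|<\infty$ supplies at each $\theta$. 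Averaging over $i\in I_F$ then gives $\E[\Risk_F(\hat\theta_R)\mid\mathcal{G}]=h(\hat\theta_R)$. This quantity is exactly $\E_{Z\sim P}[\ell(\hat\theta_R;Z)]$, read as the population risk evaluated at the random parameter $\hat\theta_R$.

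The main obstacle is integrability in the conditional sense. The hypothesis gives $\E|\ell(\theta;Z)|<\infty$ pointwise in $\theta$, but $\E|\ell(\hat\theta_R;Z_i)|$ can be infinite. To handle this I will use generalized conditional expectation. For nonnegative integrands, the freezing lemma holds with no integrability requirement. I will split $\ell=\ell^+-\ell^-$ and apply the nonnegative version to each part, obtaining $\E[\ell^\pm(\hat\theta_R;Z_i)\mid\mathcal{G}]=h^\pm(\hat\theta_R)$. Both sides are finite almost surely by the pointwise hypothesis, so subtracting is valid. Equivalently, I can compute with a regular conditional distribution: given $\mathcal{G}$, the forgotten $Z_i$ are i.i.d.\ $P$, and the identity follows by Fubini.

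Finally, the extension to bounded statistics is immediate. Replace $\ell(\theta;z)$ by $\phi(\theta;z)$, for example the correctness indicator $\mathbf{1}\{\arg\max f_\theta(x)=y\}$. Integrability is automatic for a bounded $\phi$, so the same argument applies with no modification. This yields the accuracy statement.
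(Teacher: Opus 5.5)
Your proof is correct and takes the same route as the paper: condition on the retained indices and values, use the independence of the forgotten draws from $\hat\theta_R$ to identify the conditional expectation of each forgotten loss with the population risk at $\hat\theta_R$, and average over $I_F$. Your atom-by-atom use of the freezing lemma and the split $\ell=\ell^+-\ell^-$ make rigorous the measurability and integrability points that the paper's short argument leaves implicit.
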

\begin{proof}
Condition on the retained indices and retained sample values. The forgotten observations remain independent draws from $P$ and are independent of $\hat\theta_R$, because $\hat\theta_R$ is a function only of retained observations. The conditional expectation of each forgotten loss is therefore the population risk of $\hat\theta_R$. Averaging over forgotten indices gives the result. Replacing loss by a bounded prediction statistic gives the second statement.
\end{proof}
\Cref{prop:random-delete} explains why ``forget accuracy down'' is the wrong generic objective for random deletion. The expected forget behavior of exact retraining is population behavior. For random deletion, exact retraining therefore retains high forget accuracy rather than driving it to zero. A method that collapses forget accuracy by damaging the classifier is farther from the deletion target, not closer.
\subsection{First-order parameter displacement from deletion}
\begin{assumption}[Local regularity]
\label{ass:regularity}
The risks $\Risk_R$ and $\Risk_F$ are twice continuously differentiable in a neighborhood of $\thetao$, and the Hessian of $\Risk_R$ is locally Lipschitz there. The original model is stationary for the full empirical risk,
\begin{equation}
(1-\rho)\nabla\Risk_R(\thetao)+\rho\nabla\Risk_F(\thetao)=0.
\end{equation}
The retained Hessian $H_R=\nabla^2\Risk_R(\thetao)$ is invertible on the local subspace under consideration.
\end{assumption}
\begin{proposition}[Local retraining direction]
\label{prop:retrain-direction}
Under \cref{ass:regularity}, let $d=\thetar-\thetao$ and assume $\thetar$ is a stationary point of $\Risk_R$ in the same local basin. Then
\begin{equation}
 d=\frac{\rho}{1-\rho}H_R^{-1}\nabla\Risk_F(\thetao)+O(\norm{d}^2).
\label{eq:retrain-direction}
\end{equation}
\end{proposition}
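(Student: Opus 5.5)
The plan is to Taylor-expand the retained-risk stationarity condition around $\thetao$ and then use the full-risk stationarity in \cref{ass:regularity} to express the zeroth-order term through $\nabla\Risk_F(\thetao)$.

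\textbf{Expansion.} Since $\thetar$ is a stationary point of $\Risk_R$ in the same local basin, we have $\nabla\Risk_R(\thetao+d)=0$. Because $\Risk_R$ is $C^2$ with locally Lipschitz Hessian (constant $L$) near $\thetao$, the integral form of Taylor's theorem gives
\begin{equation}
0=\nabla\Risk_R(\thetao)+H_R d+r,\qquad \norm{r}\le \tfrac{L}{2}\norm{d}^2 .
\end{equation}
To get the remainder bound, write $r=\int_0^1\bigl(\nabla^2\Risk_R(\thetao+td)-H_R\bigr)d\,dt$ and apply the Lipschitz bound to the integrand. This requires the segment from $\thetao$ to $\thetar$ to lie in the neighborhood where the assumption holds. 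The phrase ``same local basin'' is meant to supply this, and I would state it explicitly as the interpretation of that hypothesis.

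\textbf{Eliminating $\nabla\Risk_R(\thetao)$.} Full-risk stationarity, $(1-\rho)\nabla\Risk_R(\thetao)+\rho\nabla\Risk_F(\thetao)=0$, gives
\begin{equation}
\nabla\Risk_R(\thetao)=-\frac{\rho}{1-\rho}\nabla\Risk_F(\thetao),
\end{equation}
assuming $\rho<1$, which is implicit because the retained set is nonempty. Substituting into the expansion yields $H_R d=\frac{\rho}{1-\rho}\nabla\Risk_F(\thetao)-r$. Applying $H_R^{-1}$ then gives $d=\frac{\rho}{1-\rho}H_R^{-1}\nabla\Risk_F(\thetao)-H_R^{-1}r$, with $\norm{H_R^{-1}r}\le \frac{L}{2}\norm{H_R^{-1}}\norm{d}^2=O(\norm{d}^2)$. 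The constant in the $O(\cdot)$ depends only on $L$ and $\norm{H_R^{-1}}$, so it is uniform over the neighborhood.

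\textbf{Main obstacle.} The delicate point is interpreting ``$H_R$ invertible on the local subspace'' when $H_R$ may be singular in over-parameterized networks. In that case I would restrict to the subspace $S$ on which $H_R$ is invertible and assume that both $d$ and $\nabla\Risk_F(\thetao)$ lie in $S$, or project onto it, reading $H_R^{-1}$ as the inverse on $S$. The same algebra then goes through with $\norm{(H_R|_S)^{-1}}$ in the constant. The remaining steps are routine, so I would note this interpretation explicitly and keep the argument otherwise as above.
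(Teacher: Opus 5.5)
Your proposal is correct and follows the paper's proof. Both arguments use $\nabla\Risk_R(\thetar)=0$ to Taylor-expand $\nabla\Risk_R$ at $\thetao$, substitute $\nabla\Risk_R(\thetao)=-\rho(1-\rho)^{-1}\nabla\Risk_F(\thetao)$ from full-risk stationarity, and apply $H_R^{-1}$; your additions (the integral-form bound $\norm{H_R^{-1}r}\le\tfrac{L}{2}\norm{H_R^{-1}}\norm{d}^2$, the requirement that the segment from $\thetao$ to $\thetar$ lie in the regular neighborhood, $\rho<1$, and the restricted-inverse reading of $H_R^{-1}$) only make explicit what the paper leaves implicit in ``same local basin,'' ``invertible on the local subspace,'' and its $O(\norm{d}^2)$ notation.
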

\begin{proof}
Taylor expansion of the retained gradient around $\thetao$ gives
\begin{equation}
0=\nabla\Risk_R(\thetar)=\nabla\Risk_R(\thetao)+H_Rd+O(\norm{d}^2).
\end{equation}
Stationarity of the full risk implies
$\nabla\Risk_R(\thetao)=-\rho(1-\rho)^{-1}\nabla\Risk_F(\thetao)$.
Multiplying by $H_R^{-1}$ and rearranging yields \cref{eq:retrain-direction}.
\end{proof}
In a network with a singular or indefinite Hessian, the damped inverse used in practice defines a regularized surrogate direction. It is not algebraically identical to the displacement of an exact retraining solution.
The sign in \cref{eq:retrain-direction} is sometimes counterintuitive. Removing $\cF$ does not correspond to stepping along negative forget gradient. At a stationary full-data solution, it corresponds locally to an inverse-retained-Hessian transform of the positive forget gradient. Gradient ascent on the forget loss can still be used as a heuristic, but it is not the first-order retraining displacement.
\subsection{The input direction that matches a desired update}
Consider one model update on a candidate set $C$ whose inputs are parameterized by a perturbation vector $\delta$:
\begin{equation}
\theta^+(\delta)=\thetao-\eta\nabla_\theta\Risk_C(\thetao;\delta).
\end{equation}
Let $\Delta(\delta)=\theta^+(\delta)-\thetao$, $\Delta_0=\Delta(0)$, and define the local parameter-matching objective
\begin{equation}
J(\delta)=\frac12\norm{\Delta(\delta)-d}^2,
\label{eq:matching-objective}
\end{equation}
where $d$ is the desired deletion displacement from \cref{prop:retrain-direction}. Let
$B=\nabla^2_{\theta\delta}\Risk_C(\thetao;0)$.
\begin{proposition}[Steepest first-order data direction]
\label{prop:data-direction}
Assume that $\nabla_\theta\Risk_C(\thetao;\delta)$ is differentiable in $\delta$ near zero and that its Jacobian is locally Lipschitz. Then
\begin{equation}
\Delta(\delta)=\Delta_0-\eta B\delta+O(\eta\norm{\delta}^2).
\end{equation}
For a small Euclidean budget $\norm{\delta}\leq\epsilon$, the direction that decreases \cref{eq:matching-objective} most rapidly at $\delta=0$ is
\begin{equation}
\delta^*=\epsilon\frac{B^\top(\Delta_0-d)}{\norm{B^\top(\Delta_0-d)}}
\label{eq:steepest-data}
\end{equation}
when the denominator is nonzero. If the unperturbed candidate update $\Delta_0$ is neglected and $d$ is replaced by the influence approximation in \cref{eq:retrain-direction}, then
\begin{equation}
\delta^*\ \propto\ -B^\top H_R^{-1}\nabla\Risk_F(\thetao).
\label{eq:dpl-local}
\end{equation}
\end{proposition}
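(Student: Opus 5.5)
The argument has three parts: a first-order expansion of the update map in $\delta$, a one-line gradient computation for $J$ at the origin, and a substitution of \cref{eq:retrain-direction}.

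\textbf{Expanding $\Delta(\delta)$.} By definition,
\begin{equation*}
\Delta(\delta)=-\eta\nabla_\theta\Risk_C(\thetao;\delta).
\end{equation*}
Write $G(\delta)=\nabla_\theta\Risk_C(\thetao;\delta)$. By assumption $G$ is differentiable near zero, its Jacobian at zero is $B=\nabla^2_{\theta\delta}\Risk_C(\thetao;0)$, and that Jacobian is locally Lipschitz with some constant $L$. The integral form of the mean-value theorem gives
\begin{equation*}
G(\delta)=G(0)+\int_0^1 DG(t\delta)\,\delta\,dt,
\end{equation*}
so $\norm{G(\delta)-G(0)-B\delta}\le \tfrac{L}{2}\norm{\delta}^2$. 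Multiplying by $-\eta$ yields
\begin{equation*}
\Delta(\delta)=\Delta_0-\eta B\delta+O(\eta\norm{\delta}^2),
\end{equation*}
which is the first claim.

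\textbf{Steepest descent direction.} Because $\Delta$ is differentiable at zero with Jacobian $-\eta B$, the chain rule gives
\begin{equation*}
\nabla_\delta J(0)=(-\eta B)^\top(\Delta_0-d)=-\eta B^\top(\Delta_0-d).
\end{equation*}
Let $g$ denote this gradient. For any $\norm{\delta}\le\epsilon$,
\begin{equation*}
J(\delta)=J(0)+\langle g,\delta\rangle+O(\norm{\delta}^2).
\end{equation*}
The last term uses the Lipschitz remainder above together with the boundedness of $\Delta_0-d$. By Cauchy--Schwarz, the linear term is minimized over the ball by $\delta=-\epsilon g/\norm{g}$. Since $\eta>0$ cancels in the normalization, this equals
\begin{equation*}
\epsilon\,\frac{B^\top(\Delta_0-d)}{\norm{B^\top(\Delta_0-d)}},
\end{equation*}
which is \cref{eq:steepest-data}. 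The hypothesis that the denominator is nonzero is exactly $g\neq0$.

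Here ``decreases most rapidly'' should be read as the direction maximizing the negative directional derivative among unit vectors. The $O(\norm{\delta}^2)$ term does not change this first-order statement.

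\textbf{Specializing to the influence approximation.} Set $\Delta_0=0$ and substitute the leading term of \cref{eq:retrain-direction},
\begin{equation*}
d=\frac{\rho}{1-\rho}H_R^{-1}\nabla\Risk_F(\thetao).
\end{equation*}
Then $B^\top(\Delta_0-d)=-\frac{\rho}{1-\rho}B^\top H_R^{-1}\nabla\Risk_F(\thetao)$. The factor $\rho/(1-\rho)$ is positive and is absorbed into the normalization, which gives \cref{eq:dpl-local}.

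The main point needing care is this last step. The resulting proportionality holds only for the neglected-$\Delta_0$, first-order surrogate, not for the exact minimizer of $J$. I would state explicitly that the $O(\norm{d}^2)$ remainder of \cref{prop:retrain-direction} is dropped.

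I would also note that \cref{eq:dpl-local} matches \cref{eq:dpl-direction}. Write $\Risk_C$ as a single-example loss and $\delta$ as the input perturbation. Then $B^\top u=\nabla_x\langle\nabla_\theta\ell,u\rangle$ by symmetry of mixed partials, and taking $u=-v$ recovers $\delta_x$.
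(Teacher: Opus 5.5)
Your proof is correct and follows the paper's argument essentially step for step. You expand the candidate gradient to first order in $\delta$, compute $\nabla_\delta J(0)=-\eta B^\top(\Delta_0-d)$ by the chain rule, minimize the linear term over the Euclidean ball, and substitute the leading term of \cref{eq:retrain-direction}, absorbing the positive factor $\rho/(1-\rho)$ into the normalization; your explicit $\tfrac{L}{2}\norm{\delta}^2$ remainder bound and the consistency check against \cref{eq:dpl-direction} make explicit details the paper leaves implicit.
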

\begin{proof}
Taylor expansion of the candidate gradient gives
$\nabla_\theta\Risk_C(\thetao;\delta)=\nabla_\theta\Risk_C(\thetao;0)+B\delta+O(\norm{\delta}^2)$.
Multiplying by $-\eta$ gives the expansion of $\Delta$. Differentiating \cref{eq:matching-objective} at zero yields
\begin{equation}
\nabla_\delta J(0)=-\eta B^\top(\Delta_0-d).
\end{equation}
The minimizer of the first-order approximation $J(0)+\langle\nabla J(0),\delta\rangle$ over the Euclidean ball points opposite the gradient, which gives \cref{eq:steepest-data}. Setting $\Delta_0\approx0$ and substituting \cref{eq:retrain-direction} gives \cref{eq:dpl-local}; the positive scalar $\rho/(1-\rho)$ does not affect the normalized direction.
\end{proof}
Eq.~\eqref{eq:dpl-local} provides a local justification for the DPL direction. Under the stated approximations, it is a first-order input direction for matching one parameter update. This result does not extend to signed perturbations, repeated nonlinear updates, accuracy, or membership-inference behavior.
\subsection{Coordinate errors can invalidate the local guarantee}
Let $s=B^\top(\Delta_0-d)$ be the ideal direction in \cref{eq:steepest-data}. Suppose an implementation computes $\tilde s=s+e$, where $e$ captures crop, flip, normalization, indexing, or numerical errors.
\begin{proposition}[First-order tolerance to direction error]
\label{prop:coord-error}
Assume $\tilde s\neq0$ and that $\nabla J$ is locally Lipschitz. For the normalized implemented step $\delta=\epsilon\tilde s/\norm{\tilde s}$, the first-order change in $J$ is
\begin{equation}
J(\delta)-J(0)=-\eta\epsilon\frac{\langle s,\tilde s\rangle}{\norm{\tilde s}}+O(\epsilon^2).
\end{equation}
Thus the intended first-order decrease occurs exactly when $\langle s,\tilde s\rangle>0$. A sufficient condition is $\norm{e}<\norm{s}$.
\end{proposition}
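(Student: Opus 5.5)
The plan is a first-order Taylor expansion of $J$ at $\delta=0$, reusing the gradient computed in the proof of \cref{prop:data-direction}. There we found $\nabla_\delta J(0)=-\eta B^\top(\Delta_0-d)=-\eta s$. Since $\nabla J$ is locally Lipschitz, say with constant $L$ on a ball around $0$, the standard descent-lemma bound gives
\begin{equation}
\abs{J(\delta)-J(0)-\langle\nabla J(0),\delta\rangle}\le\frac{L}{2}\norm{\delta}^2
\end{equation}
for all sufficiently small $\delta$. To obtain it, I will write $J(\delta)-J(0)=\int_0^1\langle\nabla J(t\delta),\delta\rangle\,dt$ and bound the difference $\nabla J(t\delta)-\nabla J(0)$ by $Lt\norm{\delta}$.

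Next I substitute the implemented step $\delta=\epsilon\tilde s/\norm{\tilde s}$, which is well defined because $\tilde s\neq0$. It satisfies $\norm{\delta}=\epsilon$. The linear term becomes
\begin{equation}
\langle -\eta s,\epsilon\tilde s/\norm{\tilde s}\rangle=-\eta\epsilon\frac{\langle s,\tilde s\rangle}{\norm{\tilde s}},
\end{equation}
and the remainder is at most $L\epsilon^2/2=O(\epsilon^2)$. This gives the displayed identity.

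For the "exactly when" clause I read "intended first-order decrease" as the sign of the coefficient of $\epsilon$. That coefficient is $-\eta\langle s,\tilde s\rangle/\norm{\tilde s}$. With $\eta>0$ it is negative if and only if $\langle s,\tilde s\rangle>0$. When the inner product is positive, $J(\delta)<J(0)$ for all sufficiently small $\epsilon$, since the linear term dominates the $O(\epsilon^2)$ remainder.

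For the sufficient condition I expand $\langle s,\tilde s\rangle=\norm{s}^2+\langle s,e\rangle$. By Cauchy--Schwarz, $\langle s,e\rangle\ge-\norm{s}\norm{e}$, so
\begin{equation}
\langle s,\tilde s\rangle\ge\norm{s}\bigl(\norm{s}-\norm{e}\bigr),
\end{equation}
which is strictly positive when $\norm{e}<\norm{s}$. Note that $\norm{e}<\norm{s}$ forces $s\neq0$, and it also gives $\tilde s\neq0$ automatically.

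I expect no real obstacle here. The only delicate point is the interpretation of "exactly when": it is a statement about the first-order term, not about $J(\delta)<J(0)$ at a fixed $\epsilon$. I will state that distinction explicitly. I will also note that the $O(\epsilon^2)$ constant depends on the local Lipschitz constant of $\nabla J$ and not on $e$, because $\norm{\delta}=\epsilon$ regardless of the error.
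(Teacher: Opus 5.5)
Your proposal is correct and follows the paper's proof. Like the paper, you take $\nabla J(0)=-\eta s$ from the previous proposition, expand along the normalized step $\delta=\epsilon\tilde s/\norm{\tilde s}$, read the decrease condition off the sign of the linear term, and bound $\langle s,s+e\rangle\geq\norm{s}(\norm{s}-\norm{e})$ by Cauchy--Schwarz. Your only additions are to make the $O(\epsilon^2)$ remainder explicit as $L\epsilon^2/2$ via the integral form, and to note that $\norm{e}<\norm{s}$ already implies $\tilde s\neq0$; the paper leaves both points implicit.
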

\begin{proof}
From the proof of \cref{prop:data-direction}, $\nabla J(0)=-\eta s$. Taking the directional Taylor expansion along $\delta$ gives the displayed expression. The decrease condition follows from the sign of the linear term. Finally,
\begin{equation}
\langle s,s+e\rangle=\norm{s}^2+\langle s,e\rangle
\geq \norm{s}^2-\norm{s}\norm{e}>0
\end{equation}
when $\norm{e}<\norm{s}$.
\end{proof}
A random crop or flip is not a small additive error in raw pixel coordinates. It can permute the coordinates on which $B^\top(\Delta_0-d)$ is defined. Likewise, a derivative in normalized units cannot be interpreted as a raw-space perturbation without the normalization Jacobian. These errors can violate the inner-product condition even when tensor shapes agree.
\subsection{Parameter alignment is only a surrogate}
\begin{lemma}[Lipschitz behavioral upper bound]
\label{lem:lipschitz}
Suppose each scalar metric $M_k$ is $L_k$-Lipschitz in a neighborhood of $\thetar$. Then the discrepancy in \cref{eq:behavioral-diff} obeys
\begin{equation}
\diff(\theta,\thetar)\leq\left(\sum_{k=1}^{K}w_kL_k\right)\norm{\theta-\thetar}.
\end{equation}
\end{lemma}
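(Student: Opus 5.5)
The plan is a term-by-term argument that uses only the definition of \diff in \cref{eq:behavioral-diff} and the Lipschitz hypothesis on each $M_k$. No earlier proposition is needed. Fix $\theta$ in the neighborhood $U$ of $\thetar$ on which every $M_k$ is $L_k$-Lipschitz. If the $M_k$ come with different neighborhoods, I will take $U$ to be their intersection, which is still a neighborhood of $\thetar$ because $K$ is finite. The statement is to be read for $\theta\in U$; outside $U$ no bound is claimed, and I will say so explicitly.

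The steps, in order:
\begin{enumerate}
\item For each $k$, apply the Lipschitz property to the pair $(\theta,\thetar)$, both of which lie in $U$. This gives $\abs{M_k(\theta)-M_k(\thetar)}\le L_k\norm{\theta-\thetar}$.
\item Multiply each inequality by $w_k$. Because $w_k\ge0$ is part of the definition of \diff, the inequalities are preserved.
\item Sum over $k=1,\dots,K$. This yields $\diff(\theta,\thetar)\le\sum_k w_kL_k\norm{\theta-\thetar}$.
\item Factor out the common $\norm{\theta-\thetar}$ to reach the displayed bound.
\end{enumerate}

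There is no substantive obstacle. The only point needing care is the domain of validity in step 1. The Lipschitz property on $U$ compares two points of $U$, so both $\theta$ and $\thetar$ must lie in it. $\thetar\in U$ holds trivially since $U$ is a neighborhood of $\thetar$, and $\theta\in U$ is the standing restriction. If one instead wanted the bound along a segment, convexity of $U$ would matter, but the direct two-point Lipschitz definition avoids that.

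After the proof I will add a remark. Accuracy-type metrics are piecewise constant in $\theta$ and so are not Lipschitz in the usual sense. The constants $L_k$ are therefore best read as applying to smoothed surrogates, for example expected softmax accuracy. This is consistent with the paper's point that parameter proximity is only a surrogate for behavioral agreement. The bound is one-directional: small $\norm{\theta-\thetar}$ forces small \diff, but not conversely.
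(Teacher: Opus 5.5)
Your proof is correct and matches the paper's argument: apply the Lipschitz inequality $|M_k(\theta)-M_k(\thetar)|\le L_k\norm{\theta-\thetar}$ to each term, weight by $w_k\ge 0$, and sum. Restricting $\theta$ to the finite intersection of the neighborhoods is a point the paper leaves implicit, and it is the right reading of the hypothesis.
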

\begin{proof}
Apply the Lipschitz inequality $|M_k(\theta)-M_k(\thetar)|\leq L_k\norm{\theta-\thetar}$ to each term and sum with weights $w_k$.
\end{proof}
Accuracy and thresholded membership decisions are not globally Lipschitz. RMIA and SVC statistics also depend on reference distributions and nonlinear scores. Therefore \cref{lem:lipschitz} cannot turn local parameter alignment into a deletion guarantee. It instead explains why end-to-end behavioral evaluation remains necessary even when the influence calculation is mathematically correct.
\section{Implementation and Numerical Audit}
\label{sec:audit}
The CIFAR-10 audit was performed before the final role comparisons, and the same tensor-level procedure was applied during the Tiny ImageNet external check. The direct path reuses the public DPL implementation~\citep{song2023dpl} for its dataset class, Hessian solve, mixed-derivative routine, and perturbation write-back, wrapped in our deletion-specific guide and candidate selection; the adversarial pipeline is the public AMUN repository~\citep{ebrahimpour2025amun}. We traced the actual tensors, transforms, signs, projection steps, and output files rather than relying on code comments, and we state for each finding which codebase it originates in.
\paragraph{Spatial and unit mismatch in the original direct path.}
The dataset class used for direction construction exposes writable image tensors (we call it the mutable variant). It stores raw images in $[0,\ 1]$. During direction construction, however, its data loader applied random crop, random horizontal flip, and CIFAR normalization, as in the released DPL training loader. The mixed derivative was therefore defined on a randomly transformed normalized tensor. The application path then added $\alpha\,\mathrm{sign}(\delta_x)$ to the unaugmented raw image. Tensor shape was preserved, but spatial coordinates and numerical units were not. By \cref{prop:coord-error}, the intended local improvement is not preserved unless the implemented and ideal directions retain a positive inner product.
\paragraph{Corrected raw-space mode.}
We introduced an explicit deterministic mode that leaves existing defaults unchanged. It disables stochastic spatial transforms for direction construction, differentiates with respect to the exact raw $[0,1]$ tensor, applies normalization as a differentiable operation immediately before the model forward pass, adds the direction to the same raw tensor, and clamps only in raw-image space. Direction validation uses this mode throughout.
\paragraph{Coordinate-valid alignment diagnostic.}
On the fixed 1,024-example diagnostic subset, the corrected normalized and signed directions, which share raw-image coordinates, have mean per-sample cosine similarity 0.653; 99.7\% of samples have cosine above 0.5. The two corrected representations therefore satisfy the positive-inner-product condition of \cref{prop:coord-error} with respect to each other. A legacy-to-corrected cosine cannot be computed: the original stochastic path did not persist the per-sample crop offsets and flip decisions needed to pull the legacy direction back to raw coordinates, so we report that comparison as undefined rather than align tensors from incompatible coordinate systems.
\paragraph{Sign verification.}
The implementation computes $v=H^{-1}g_F$ and passes $-v$ to the mixed-derivative routine. The saved image and label directions therefore already contain one unlearning sign reversal. The write-back path adds the saved direction and does not reverse it again. We nevertheless evaluate both signs because \cref{prop:data-direction} is local and the realized training procedure includes normalization, signed directions, projection, and multiple updates.
\paragraph{Label perturbation is numerically inactive.}
For 5,000 forget candidates, the label direction had $\ell_\infty$ norm $3.27\times10^{-5}$ and $\ell_2$ norm $5.86\times10^{-4}$. The original correct-class soft-target probability was 0.999591708. At $\beta=0.007$, both direction signs produced zero measurable target-space change, zero KL divergence, and zero argmax changes in float32. Even $4\beta$ changed a target coordinate by only about $1.06\times10^{-8}$. We therefore disable label perturbation in the utility and warm-start studies. Any model change in the earlier label-only arm is attributable to the surrounding fine-tuning procedure, not to a changed target.
\paragraph{Adversarial attack coordinates.}
The matched AMUN repository initializes attacks from a normalized clean tensor and applies an internal $[0,1]$ clamp. This mixes normalized coordinates with raw-image bounds. We retain that behavior in the warm-start comparison so that initialization is the only changed factor, and we verify bitwise regression to the default path. We do not present the resulting perturbations as a mathematically strict raw-image $\ell_2$ threat set. This issue is separated from the DPL conclusion because it affects FGSM and DPL initializations alike.
\paragraph{A distinct Tiny ImageNet preprocessing defect.}
The second-dataset check exposed a different inconsistency. The VGG-19 source model was trained and validated with channel normalization, whereas the official Tiny ImageNet adversarial and downstream evaluation paths used raw tensors for model forward. The source checkpoint reached 57.59\% under its own normalized validation transform but only 12.28\% under the common raw evaluator. Inserting differentiable normalization before the attack forward, while retaining raw-space projection and clipping, changed full-AMUN forget, retain, and test accuracy by 57.42, 50.25, and 28.95 percentage points. Preprocessing is part of the evaluated predictor, $h_\theta(x)=f_\theta(T(x))$. A shared evaluator is valid only when $T$ is compatible with the transform used to train each checkpoint.
\paragraph{Interpretation of the audit.}
The corrected CIFAR-10 direct-role experiments rule out the original coordinate mismatch as an explanation for the seed-1 direct result. Conversely, a gain obtained under the mismatched path would not isolate the effect of the influence-derived direction. The audit determines which perturbation the experiment actually evaluates.
\section{Experiments}
\label{sec:experiments}

\subsection{Protocol and baselines}
\label{sec:protocol}

The detailed role analyses use CIFAR-10, ResNet-18, source seed 1, forget-index seed 1, and 5,000 randomly forgotten examples. The exact retrained model uses the same retained split and training recipe. Direction construction uses 5,000 guide examples, 5,000 candidates, a damped conjugate-gradient solve with at most 20 iterations, and deterministic seed 1729. The matched evaluation reports forget, retain, and test accuracy, the SVC membership-confidence statistic, RMIA FT-AUC when available, the codebase's average discrepancy to exact retraining, and wall-clock cost. We then repeat the frozen final role configurations for paired source/forget seeds $(1,1)$, $(10,10)$, and $(100,100)$, each against its exact-seed retraining reference, without retuning direction signs, scales, CG budget, utility weight, or attack-step count. We additionally report a one-seed Tiny ImageNet and VGG-19 check with 10,000 randomly forgotten examples. It is an external-validity and implementation audit, not a second decisive gate.

The wider baseline screening includes three original-model seeds and three forget-index seeds for FT, gradient ascent (GA), SalUn, and AMUN. Retraining is available for the three source seeds. Because these counts differ, the aggregate is descriptive rather than an inferential comparison. The seed-1 role analyses use identical assets, while the final cross-seed validation uses matched assets and exact retraining references within each paired seed.

The key baselines are chosen by role. Direct DPL is compared with exact retraining, SalUn, and AMUN. The utility plug-in is compared with the same explicit forgetting objective plus ordinary clean retain CE. The warm start is compared with official initialization and FGSM at matched refinement steps. GA is retained as a diagnostic baseline because it demonstrates that a low forget accuracy can be obtained by destroying the entire classifier.

\begin{table}[t]
\caption{Matched baseline screening on CIFAR-10 and ResNet-18. Accuracies are percentages. The method count differs because three source seeds and three forget-index seeds were available for approximate methods, while retraining was aggregated over the three source seeds. Lower average discrepancy is closer to retraining.}
\label{tab:baseline-screen}
\centering
\small
\begin{tabular}{lrrrrrr}
\toprule
Method & $n$ & Forget & Retain & Test & SVC conf. & Avg. diff. \\
\midrule
Retrain & 3 & 94.75 & 100.00 & 94.35 & 12.62 & 0.13 \\
FT & 9 & 99.74 & 99.84 & 93.82 & 2.53 & 3.98 \\
GA & 9 & 20.02 & 20.35 & 20.39 & 36.96 & 64.79 \\
SalUn & 9 & 99.10 & 99.93 & 94.16 & 8.19 & 2.30 \\
AMUN & 9 & 99.70 & 99.80 & 93.77 & 2.83 & 3.92 \\
\bottomrule
\end{tabular}

\end{table}

Table~\ref{tab:baseline-screen} illustrates the evaluation logic. Retraining has high forget accuracy because random forgotten instances remain predictable from the retained distribution. GA lowers forget accuracy to 20.02\%, but retain and test accuracy collapse to about 20\%. SalUn is the closest approximate baseline in this screening. FT and AMUN preserve high accuracy but remain farther from the retrained privacy and behavior statistics.

\subsection{Role I: direct forgetting does not approach retraining}

We first increased the original lightweight DPL configuration from a small perturbed subset to 5,000 guides and 5,000 candidates, used 20 conjugate-gradient (CG) iterations, separated image and label perturbations, and recomputed the direction between update stages. Across ten saved diagnostic checkpoints, forget accuracy remained 100\% and forget-retain accuracy separation was nonpositive. We then corrected the coordinate mismatch and tested both direction signs, signed and normalized representations, and $\alpha\in\{1,2,4,8\}/255$.

\begin{table}[t]
\caption{Single-seed comparison for direct unlearning. Best DPL direct is selected optimistically as the diagnostic checkpoint with the smallest average discrepancy. Even this selected checkpoint is farther from retraining than SalUn and AMUN.}
\label{tab:direct}
\centering
\small
\resizebox{\textwidth}{!}{\begin{tabular}{lrrrrrr}
\toprule
Method & Forget & Retain & Test & SVC conf. & Avg. diff. & RMIA FT-AUC \\
\midrule
Retrain & 94.58 & 100.000 & 94.39 & 12.38 & 0.000 & 0.4958 \\
SalUn & 99.04 & 99.940 & 94.27 & 10.04 & 1.745 & 0.5065 \\
AMUN & 99.72 & 99.780 & 93.47 & 2.92 & 3.935 & 0.5303 \\
\textbf{Best DPL direct} & 100.00 & 99.998 & 94.53 & 0.48 & 4.366 & 0.5414 \\
\bottomrule
\end{tabular}
}
\end{table}

The best direct checkpoint preserves test accuracy at 94.53\%, slightly above the retrained value. It nevertheless retains 100\% forget accuracy, has SVC confidence 0.48 rather than the retrained 12.38, and has RMIA FT-AUC 0.5414 rather than 0.4958. Its average discrepancy is 4.366, compared with 1.745 for SalUn and 3.935 for AMUN. High utility is therefore not evidence of deletion matching.

\begin{figure}[t]
    \centering
    \begin{subfigure}[t]{0.45\linewidth}
        \centering
        \includegraphics[width=\linewidth]{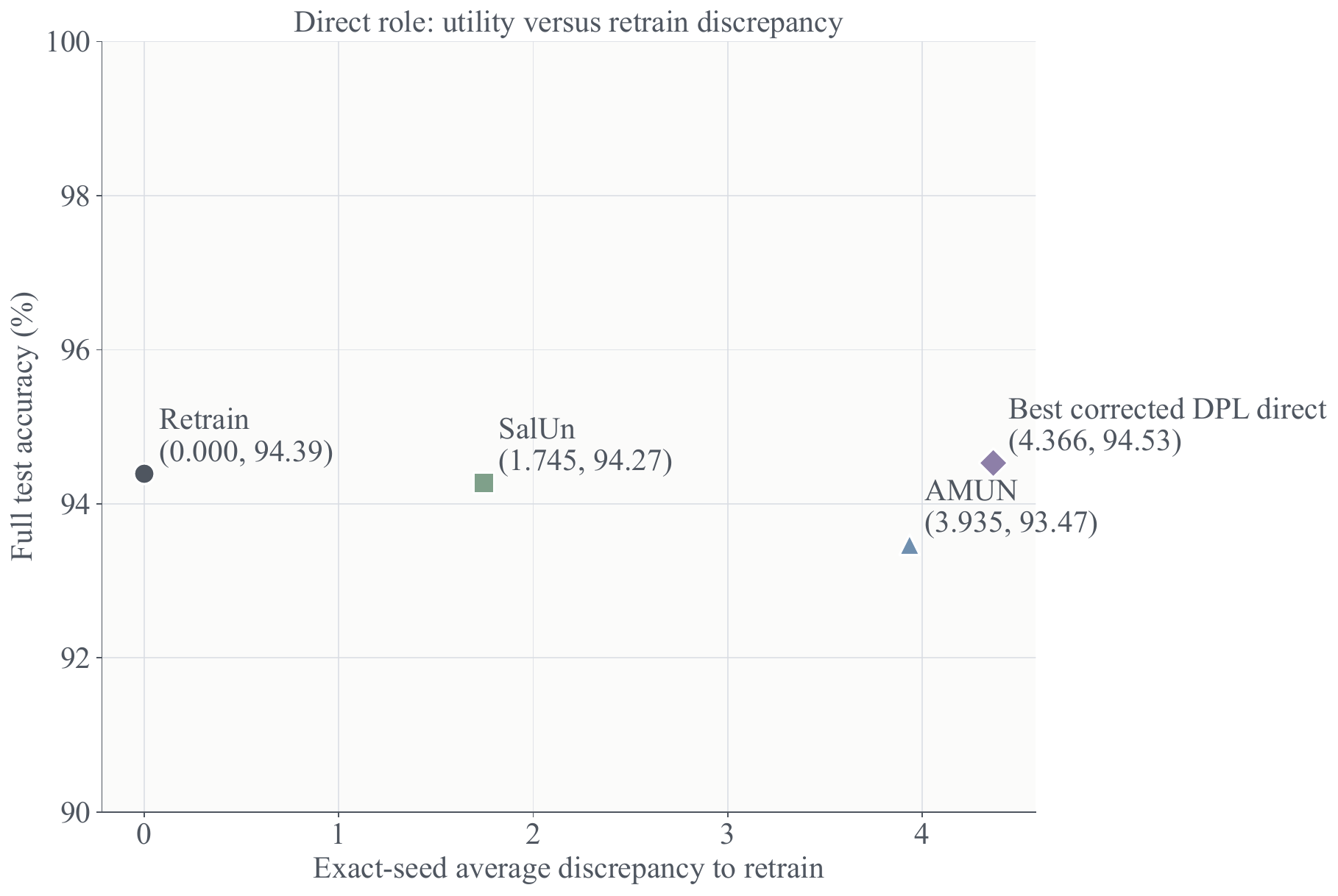}
        \caption{The best DPL direct checkpoint preserves test utility but remains far from exact retraining}
        \label{fig:direct-a}
    \end{subfigure}
    \hfill
    \begin{subfigure}[t]{0.45\linewidth}
        \centering
        \includegraphics[width=\linewidth]{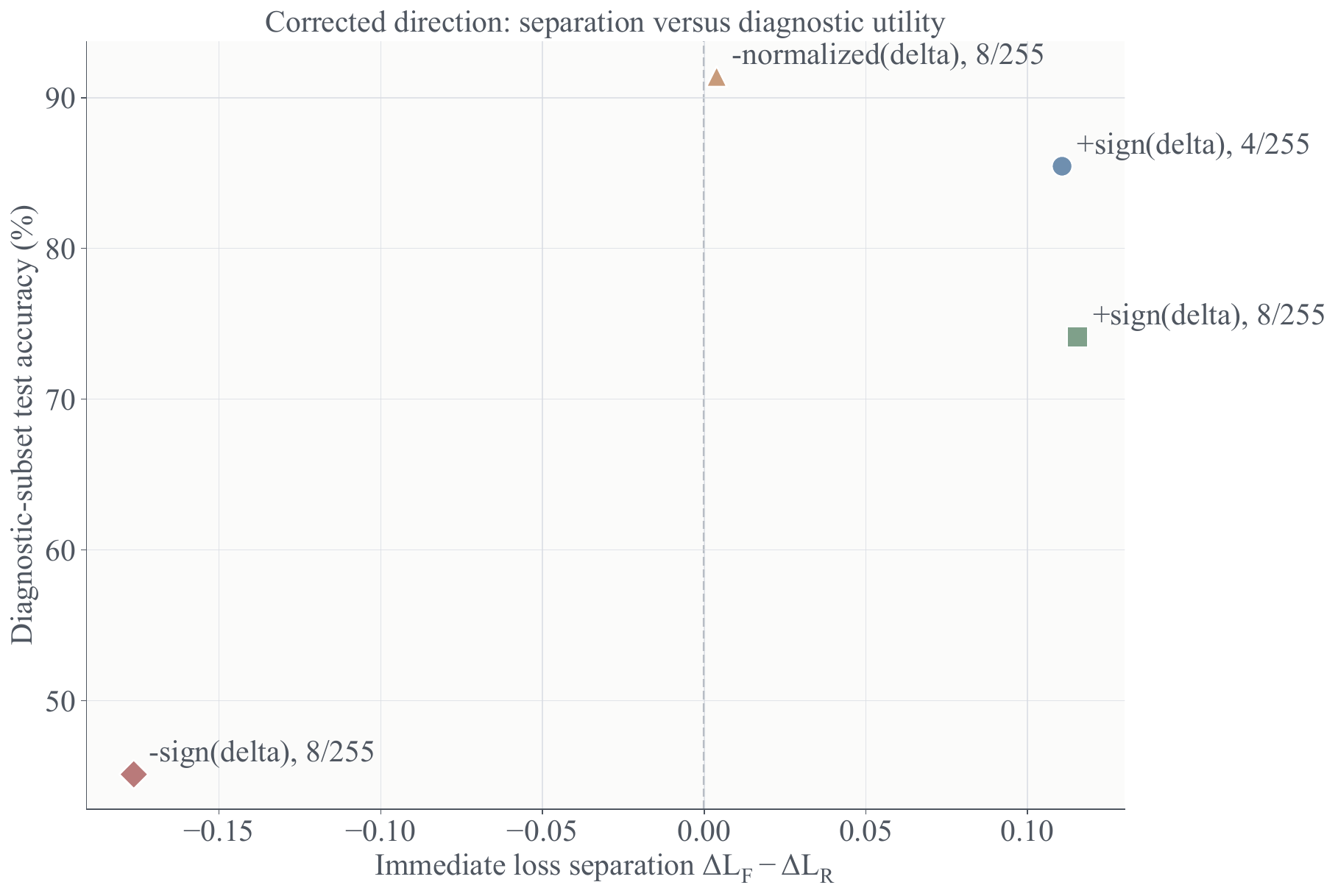}
        \caption{Corrected directions can produce positive immediate loss separation, but strong effects coincide with large utility loss, while the utility-preserving effect is very small.}
        \label{fig:direct-b}
    \end{subfigure}
    \caption{Direct-role comparison. The best utility-preserving DPL checkpoint remains far from exact retraining, while larger immediate loss separations coincide with utility loss.}
    \label{fig:direct}
\end{figure}

The corrected direction can increase forget loss relative to retain loss, but the larger separations coincide with substantial test-accuracy loss. The utility-preserving configuration yields a separation of only 0.0039. After a short update it changes forget accuracy only 0.098 percentage points more than retain accuracy, equivalent to one additional changed example in the 1,024-sample diagnostic subset. A clean fine-tuning control performs at least as well. For seed 1, we therefore reject the direct-role gate. The frozen three-seed validation in \cref{sec:multiseed} reaches the same gate-level decision for all three paired seeds.

\subsection{Role II: DPL utility does not reliably beat clean retain CE}

All utility arms start from the same adversarial-unlearning checkpoint, use identical data order, SGD settings, 40 updates, and frozen batch-normalization statistics. Each update retains the same explicit forgetting objective. The only difference is no extra utility loss, $0.1$ times clean retain CE, or $0.1$ times CE on DPL-perturbed retained examples. The corrected utility direction is normalized, uses raw coordinates, and applies an effective perturbation with measured $\ell_\infty$ magnitude 0.00273. Direction construction uses all 5,000 guide and candidate examples and takes 58.29 seconds.

\begin{table}[t]
\caption{Utility-role comparison under matched forgetting. DPL utility and clean retain CE differ by only 0.02 percentage points in forget accuracy. Lower confidence gap is closer to retraining.}
\label{tab:utility}
\centering
\small
\resizebox{\textwidth}{!}{\begin{tabular}{lrrrrrr}
\toprule
Configuration & Forget & Retain & Test & Forget loss & Retain loss & Conf. gap \\
\midrule
Explicit only & 99.62 & 99.6422 & 93.18 & 0.013257 & 0.010861 & 10.44 \\
+ clean retain CE & 99.64 & 99.7089 & 93.26 & 0.012321 & 0.009343 & 10.52 \\
+ DPL utility & 99.66 & 99.6911 & 93.29 & 0.012457 & 0.009649 & 10.42 \\
\bottomrule
\end{tabular}
}
\end{table}

\begin{figure}[t]
    \centering
    \begin{subfigure}[t]{0.45\linewidth}
        \centering
        \includegraphics[width=\linewidth]{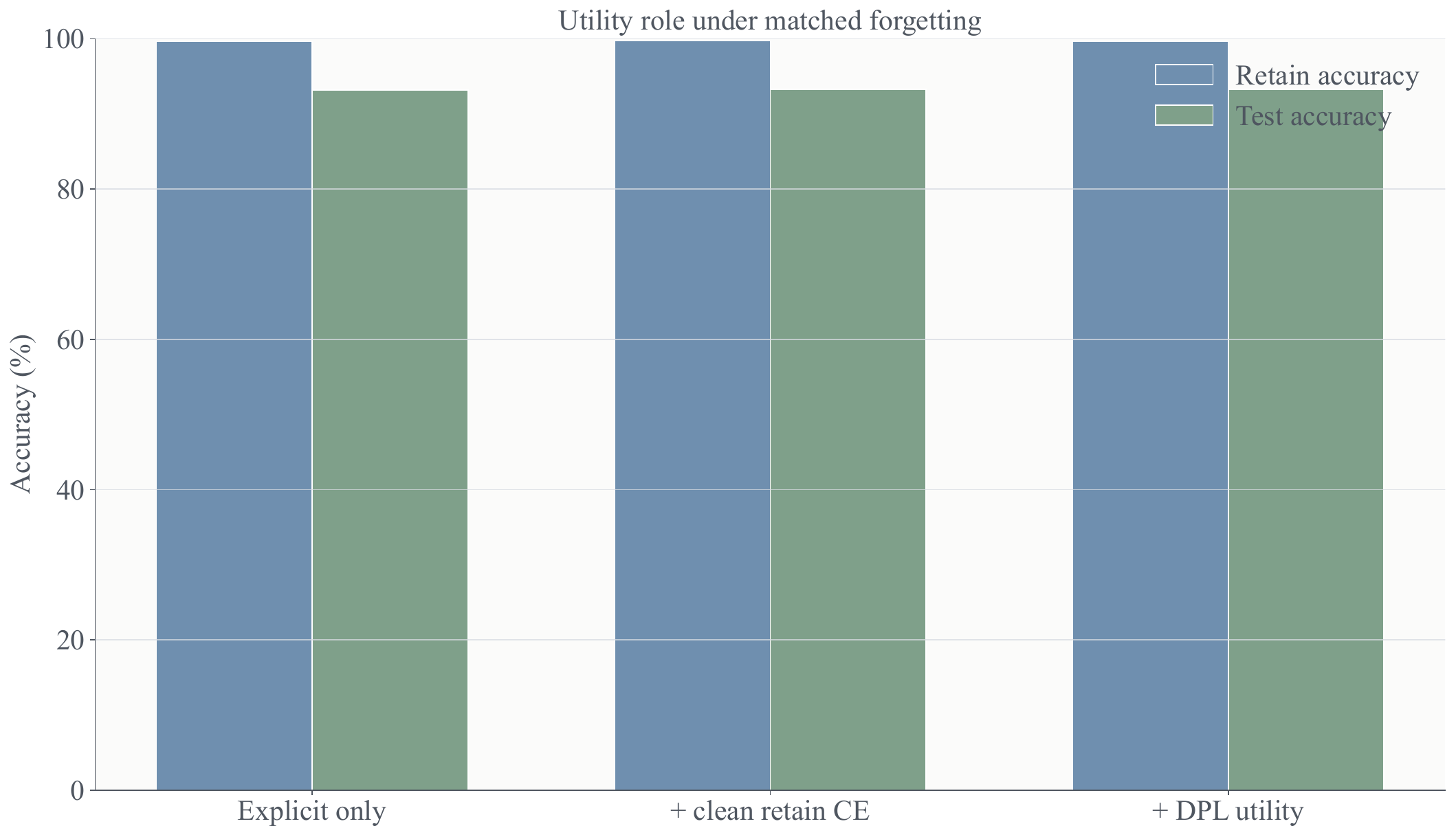}
        \caption{Unlearning performance.}
        \label{fig:utility-a}
    \end{subfigure}
    \hfill
    \begin{subfigure}[t]{0.45\linewidth}
        \centering
        \includegraphics[width=\linewidth]{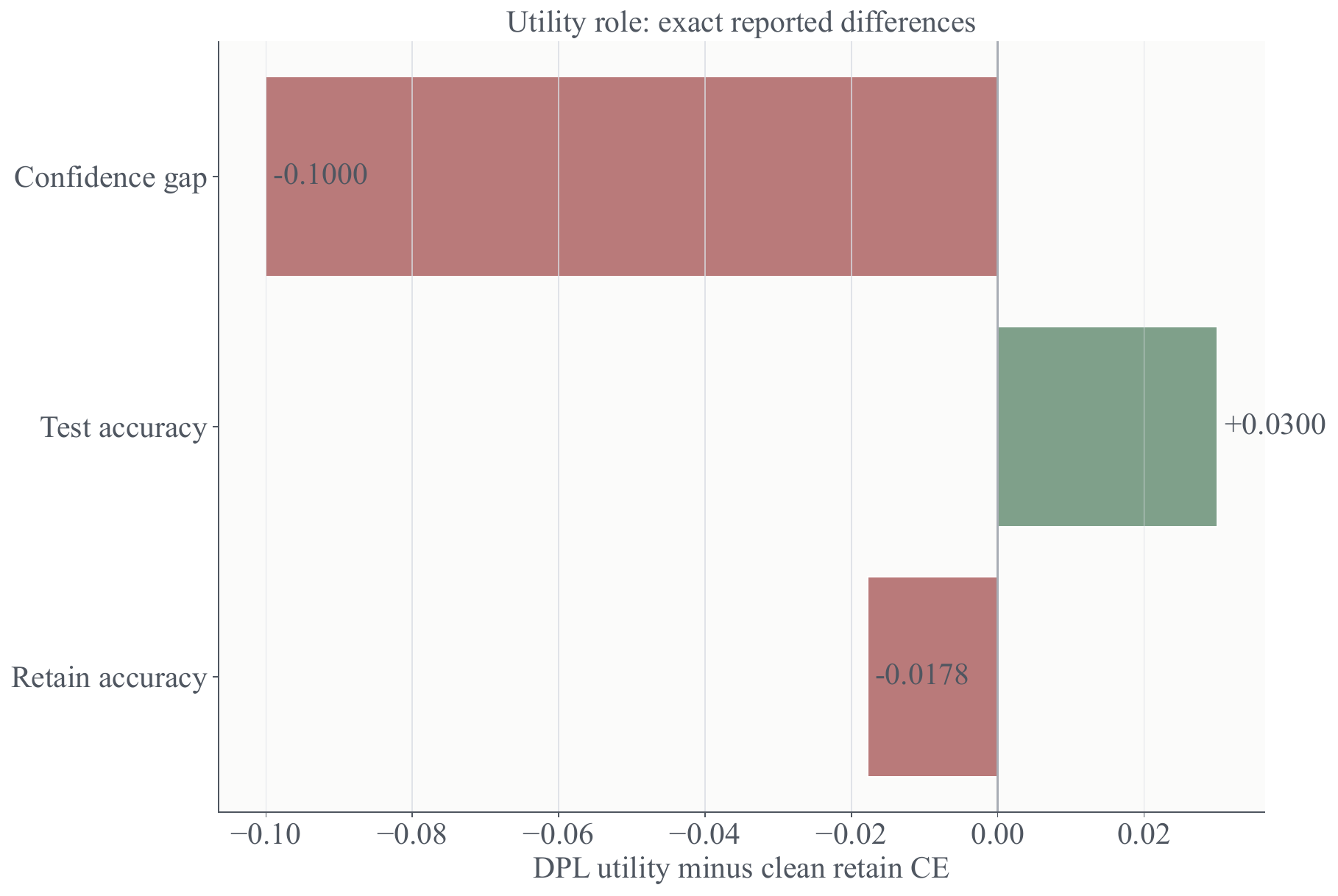}
        \caption{Utility difference.}
        \label{fig:utility-b}
    \end{subfigure}
    \caption{Utility-role evidence. DPL utility increases test accuracy by 0.030 percentage points relative to clean retain CE, but decreases retain accuracy by 0.018 points and increases retain loss.}
    \label{fig:utility}
\end{figure}

The two regularized arms satisfy the predeclared matched-forgetting tolerance of 0.25 percentage points. DPL utility gives 0.03 points higher test accuracy and a 0.10 smaller confidence gap, but 0.0178 points lower retain accuracy and higher retain loss. The gate required retain and test accuracy not to decrease, at least one strict utility improvement, and no worse confidence gap. DPL fails the retain condition. More importantly, its changes are tiny relative to a much simpler regularizer and come with nontrivial direction-computation cost. For seed 1, we reject the utility-role gate. The frozen cross-seed result is mixed: one of three paired seeds passes the utility gate, as reported in \cref{sec:multiseed}.

\subsection{Role III: warm starting adds cost without distinct convergence}

The matched adversarial pipeline uses official clean-tensor initialization and up to 50 steps per dynamic epsilon stage. We compare the official initialization, FGSM, and corrected normalized DPL initialization at one, three, and five steps. The DPL initial raw-space norm is fixed at $2/255$ before repository projection. All arms use the same adversarial labels, objective, projection code, downstream unlearning updates, data order, and evaluation. DPL direction time is included in the total.

\begin{figure}[t]
    \centering
    \begin{subfigure}[t]{0.45\linewidth}
        \centering
        \includegraphics[width=\linewidth]{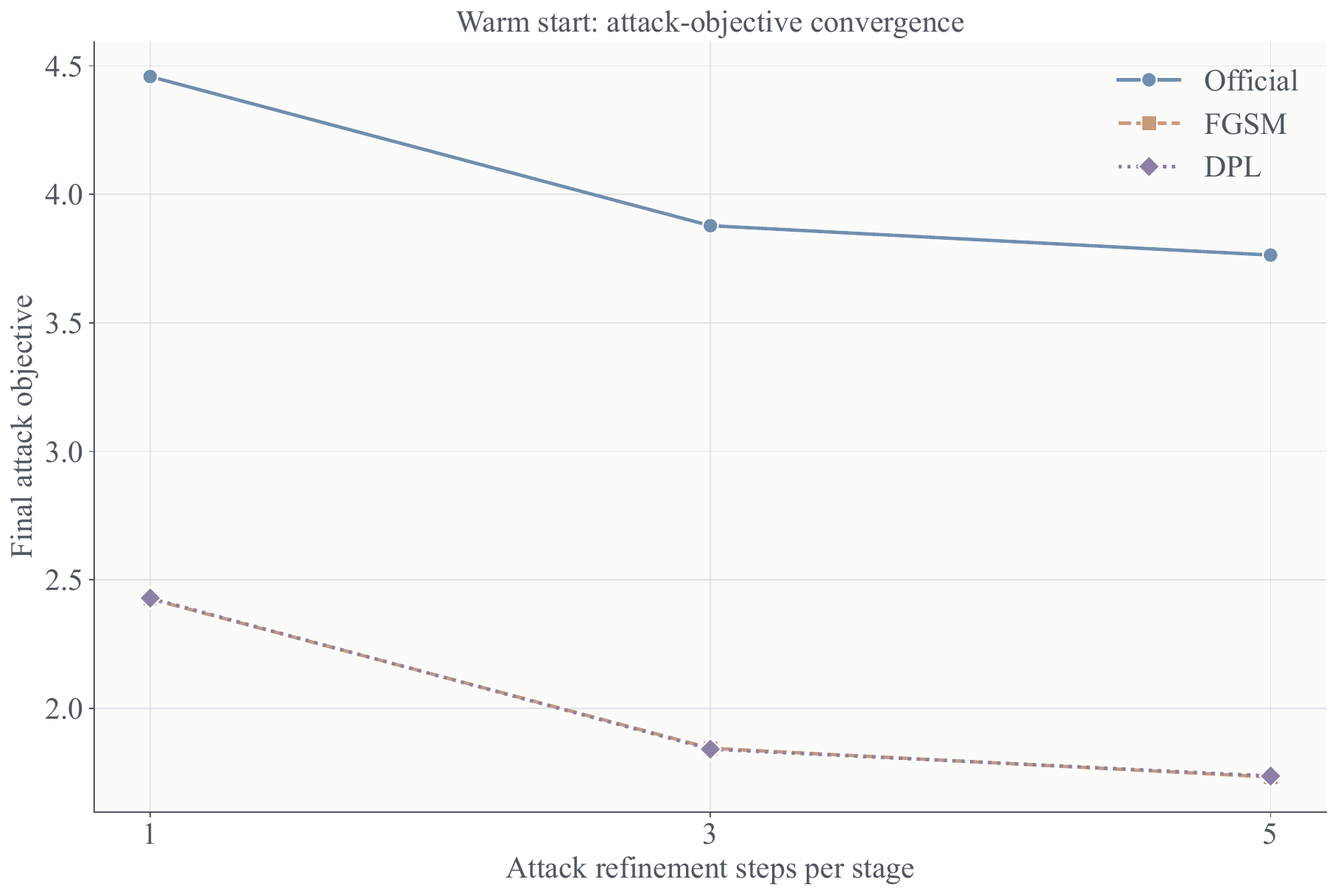}
        \caption{DPL and FGSM have almost identical attack trajectories and success rates. Official initialization reaches the strongest objective and nearly 100\% success.}
        \label{fig:warm-a}
    \end{subfigure}
    \hfill
    \begin{subfigure}[t]{0.45\linewidth}
        \centering
        \includegraphics[width=\linewidth]{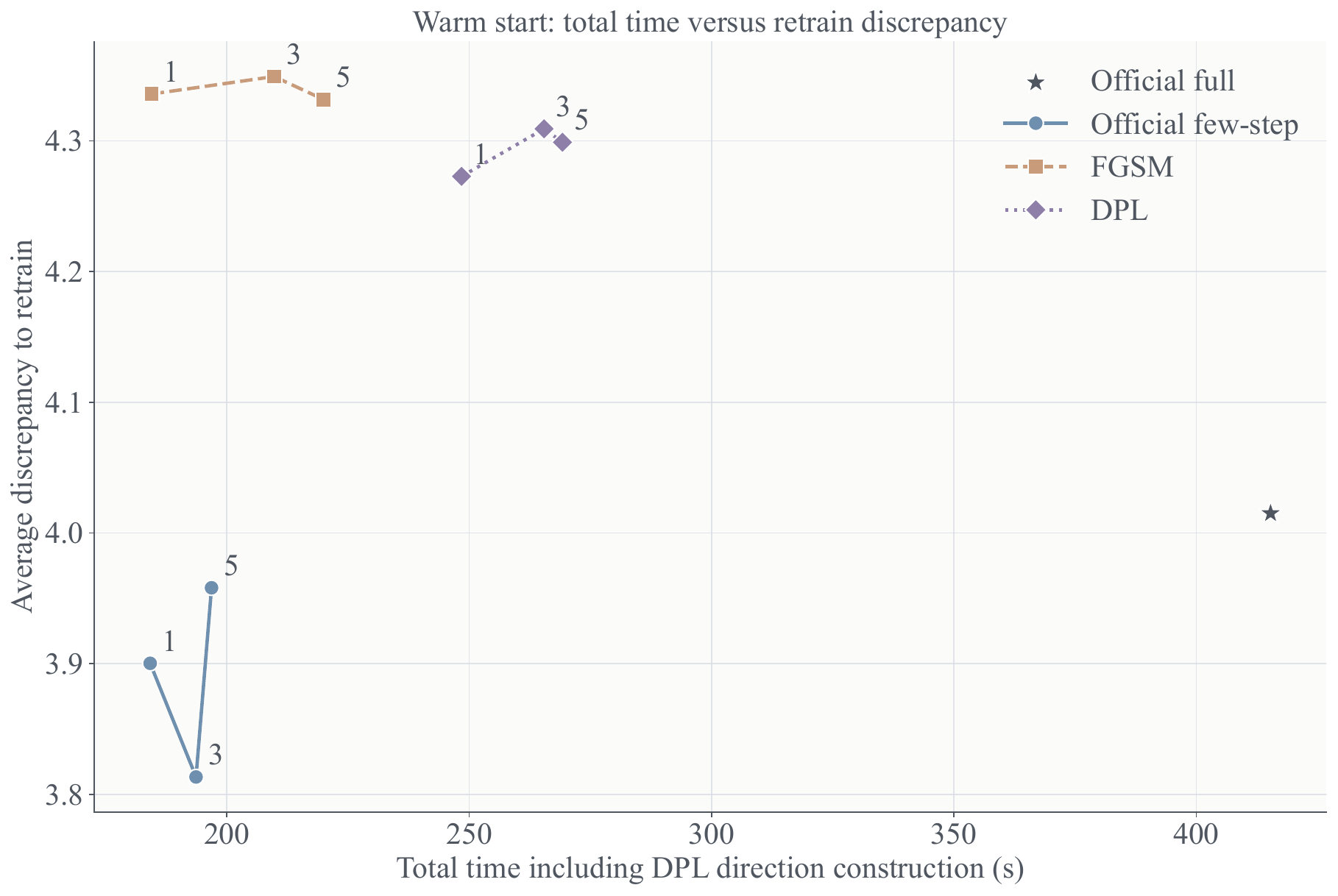}
        \caption{Reducing refinement steps explains the runtime gain. DPL remains slower than the official few-step control after its 65.99-second direction computation is included.}
        \label{fig:warm-b}
    \end{subfigure}
    \caption{Warm-start comparison. DPL and FGSM follow similar trajectories, while DPL remains slower after direction construction.}
    \label{fig:warm}
\end{figure}
\begin{table}[t]
\caption{Warm-start comparison. Success is attack success in percent. Total time includes attack generation, downstream unlearning, and DPL direction construction when applicable.}
\label{tab:warm}
\centering
\scriptsize
\resizebox{\textwidth}{!}{\begin{tabular}{lrrrrrr}
\toprule
Arm & Success & Forget & Test & Avg. diff. & Total time (s) & Backward calls \\
\midrule
Official, full & 100.00 & 99.70 & 93.63 & 4.015 & 415.3 & 25168 \\
Official, 1 step & 100.00 & 99.72 & 93.54 & 3.900 & 184.2 & 4841 \\
Official, 3 steps & 100.00 & 99.64 & 93.55 & 3.813 & 193.6 & 5664 \\
Official, 5 steps & 99.98 & 99.70 & 93.60 & 3.958 & 196.8 & 6505 \\
FGSM, 1 step & 56.98 & 99.90 & 93.64 & 4.336 & 184.5 & 4881 \\
FGSM, 3 steps & 56.98 & 99.90 & 93.70 & 4.349 & 209.8 & 5704 \\
FGSM, 5 steps & 56.96 & 99.90 & 93.57 & 4.332 & 219.9 & 6545 \\
DPL, 1 step & 56.88 & 99.82 & 93.91 & 4.273 & 248.4 & 8517 \\
DPL, 3 steps & 56.88 & 99.94 & 93.86 & 4.309 & 265.4 & 9371 \\
DPL, 5 steps & 56.86 & 99.88 & 93.95 & 4.299 & 269.2 & 10216 \\
\bottomrule
\end{tabular}
}
\end{table}

DPL and FGSM begin with attack objectives 2.0928 and 2.0869 and maintain nearly identical 56.9\% success across one, three, and five steps. Official initialization reaches about 100\% success and a larger final objective. DPL produces slightly higher test accuracy than FGSM in some arms, but its average discrepancy and RMIA remain worse than the official few-step controls. No DPL arm satisfies the full matched-quality gate because average discrepancy and membership gap exceed the allowed tolerance.

Full official AMUN takes 415.3 seconds. Its one-step and three-step variants take 184.2 and 193.6 seconds, respectively. DPL one-step takes 248.4 seconds, including 65.99 seconds for direction construction. Thus, the runtime reduction comes from using fewer refinement steps, while DPL adds direction-construction time.

\paragraph{Strict raw-space check.}
Because the repository-matched sweep above inherits its normalization and clamp convention, the final validation repeats the three-step comparison with adversarial images represented, projected, and clipped in raw $[0,1]$ space, with normalization applied only before model forward. Across paired seeds 1, 10, and 100, DPL never passes the full warm-start gate against both official and FGSM initialization. Its direction construction adds 47.2--63.0 seconds relative to FGSM, and its total time is 29.5--35.5\% higher than the faster simple initialization. The paired retrain-discrepancy results appear in \cref{tab:multiseed}; full raw-space metrics are in \cref{tab:rawspace-warm-app}.

\subsection{Cross-seed validation with frozen configurations}
\label{sec:multiseed}

We next repeat the final role configurations for paired source/forget seeds $(1,1)$, $(10,10)$, and $(100,100)$. No DPL direction sign, perturbation scale, CG budget, utility weight, attack-step count, or final-validation decision threshold is changed across the three-seed matrix, and each run is compared with its exact-seed retraining reference. \Cref{tab:multiseed} reports the paired change in \avgdiff (DPL minus its matched control); negative values favor DPL on this aggregate, but the gate outcome uses all fixed role criteria rather than \avgdiff alone.

\begin{table}[t]
\caption{Frozen three-seed CIFAR-10 validation. Entries are paired changes in \avgdiff (DPL minus the matched control), so negative values favor DPL on this aggregate. The mean is reported with the sample standard deviation across the three paired seeds. ``Passes'' counts full role-gate passes, not wins on \avgdiff alone.}
\label{tab:multiseed}
\centering
\scriptsize
\resizebox{\textwidth}{!}{\begin{tabular}{llrrrrl}
\toprule
Role & Matched comparison & Seed 1 & Seed 10 & Seed 100 & Mean $\pm$ SD & Gate outcome \\
\midrule
Direct & DPL $-$ no perturbation & +0.063 & +0.033 & -0.011 & +0.028 $\pm$ 0.037 & No pass (0/3) \\
Utility & DPL $-$ clean retain CE & -0.023 & -0.044 & +0.168 & +0.034 $\pm$ 0.117 & Mixed (1/3) \\
Warm start & DPL $-$ official init & +0.195 & +0.036 & -0.310 & -0.026 $\pm$ 0.258 & No pass (0/3) \\
Warm start & DPL $-$ FGSM & -0.052 & -0.071 & +0.153 & +0.010 $\pm$ 0.124 &  \\
\bottomrule
\end{tabular}
}
\end{table}

Direct DPL does not pass the matched direct-role gate in any seed. Seed 100 gives a small \avgdiff improvement of 0.011, but this is below the fixed practical threshold and is accompanied by 0.087 and 0.33 percentage-point decreases in retain and test accuracy. 

The utility result is mixed across seeds. Seed 10 passes the matched utility gate, whereas seeds 1 and 100 do not. In seed 100, DPL utility raises retain and test accuracy by 1.36 and 1.37 percentage points, respectively, but increases the SVC-confidence gap to retraining by 2.14 points and worsens Avg.~Diff.\ by 0.168. Seed 100 therefore improves conventional utility while moving farther from the retraining reference. Across the three seeds, DPL utility does not reliably improve over clean retain CE.

Under the strict raw-space comparison, no seed passes the warm-start gate against both simple initializations. DPL has lower \avgdiff than FGSM for seeds 1 and 10 but not seed 100, whereas it improves over official initialization only for seed 100. It therefore never beats both simpler initializations under the full gate in any seed, and it remains slower once direction construction is counted. RMIA is available for every arm in this final CIFAR-10 validation; the complete seed-level metrics are included in the supplementary validation records.

\subsection{Tiny ImageNet: bounded external check and preprocessing insight}
\label{sec:tiny}

We next ran a fixed one-seed check on Tiny ImageNet with VGG-19, source seed 1, forget-index seed 1, and random 10\% deletion. The training set contains 100,000 examples, with 90,000 retained and 10,000 forgotten. We reused the final direction conventions selected on CIFAR-10 and did not tune direction signs, perturbation magnitudes, CG iterations, utility weights, attack steps, or architecture-specific DPL hyperparameters. Exact online RMIA could not be run because the official 128-model Tiny ImageNet reference matrix was unavailable. The table therefore reports the ordinary retrain-centered statistics and complete attributable runtime.

\begin{table}[H]
\caption{One-seed Tiny ImageNet external check with VGG-19 and random 10\% deletion. Accuracies are percentages. Lower Avg. Diff is closer to the exact retrained model. Total time includes direction construction when applicable. ``Raw-forward fixed'' keeps raw-space attack projection and clipping but restores the source-training normalization before model forward. The preprocessing rows are implementation diagnostics, not additional role-level evidence.}
\label{tab:tiny}
\centering
\scriptsize
\resizebox{\textwidth}{!}{\begin{tabular}{llrrrrr}
\toprule
Role & Method & Forget & Retain & Test & Avg. Diff & Time (s) \\
\midrule
Preprocessing & Official full AMUN & 29.47 & 48.79 & 26.62 & 30.688 & 1153.8 \\
 & Raw-forward fixed AMUN & 86.89 & 99.04 & 55.57 & 19.072 & 1527.6 \\
\midrule
Direct & No perturbation & 13.97 & 14.15 & 11.65 & 46.173 & 1.6 \\
 & DPL direct & 14.20 & 14.41 & 11.76 & 46.011 & 210.0 \\
\midrule
Utility & Clean retain CE & 44.60 & 49.80 & 34.59 & 26.396 & 7.3 \\
 & DPL utility & 38.79 & 43.26 & 30.78 & 28.848 & 143.1 \\
\midrule
Warm start & Official init, 3 steps & 88.17 & 99.18 & 55.60 & 19.866 & 1056.3 \\
 & FGSM init, 3 steps & 88.14 & 99.11 & 55.69 & 20.068 & 1039.1 \\
 & DPL init, 3 steps & 88.16 & 99.16 & 56.05 & 20.113 & 1262.8 \\
\bottomrule
\end{tabular}
}
\end{table}

\paragraph{Preprocessing diagnostic.} The preprocessing mismatch substantially changes the Tiny ImageNet results. The source checkpoint obtains 57.59\% validation accuracy under its normalized source-training transform but only 12.28\% under the common raw evaluator. Correspondingly, restoring normalization before the attack forward changes full-AMUN retain and test accuracy from 48.79\% and 26.62\% to 99.04\% and 55.57\%, and reduces Avg. Diff from 30.688 to 19.072. This is too large to treat preprocessing as an incidental implementation detail.

\paragraph{Role-specific results.}
Direct DPL improves Avg.~Diff.\ over the no-perturbation control by 0.162 and test accuracy by 0.11 percentage points, but both final models remain in the preprocessing-inconsistent low-accuracy regime, with test accuracy below 12\%. DPL also costs 210.0 seconds rather than 1.6 seconds. We therefore classify the direct result as inconclusive, not as a cross-dataset success. The utility result is clearer: DPL utility lowers retain and test accuracy by 6.54 and 3.81 points, worsens Avg. Diff by 2.452, and increases runtime from 7.3 to 143.1 seconds relative to clean retain CE. The warm-start result is also consistent with CIFAR-10. DPL raises test accuracy by 0.36--0.45 points, but has no Avg.-Diff advantage and increases total time to 1262.8 seconds, compared with 1039.1 seconds for FGSM and 1056.3 seconds for official initialization.

On Tiny ImageNet, neither the utility nor the warm-start comparison favors DPL. The direct comparison remains inconclusive because both methods use a preprocessing-inconsistent low-accuracy pipeline. We therefore treat Tiny ImageNet as an external implementation check and base the three role decisions on the CIFAR-10 experiments.

\section{Takeaways and Recommendations}
\label{sec:takeaways}

\paragraph{Match the control to the proposed role.}
A direct forgetting mechanism should be compared with retraining, no-perturbation updates, and simple first-order controls. A utility term should be compared with clean retain CE, KL anchoring, or distillation. An attack initialization should be compared with the official initialization and with zero, random, or FGSM initialization when appropriate. Step counts, gradient evaluations, and total runtime should be matched.

\paragraph{Evaluate behavior relative to retraining.}
Under random deletion, forgotten examples remain samples from the training distribution, so their retrained accuracy can remain high. Report absolute gaps from retraining in forget, retain, test, and membership-inference statistics. Immediate loss separation is only a diagnostic; follow it with a controlled model update and an evaluation of final behavior. Intermediate checkpoints can reveal effects that ordinary fine-tuning later erases.

\paragraph{Audit the implemented perturbation.}
A perturbation audit must state the coordinates used to compute gradients and apply perturbations, including spatial augmentation, normalization, projection, clipping, and realized perturbation norms. For label perturbations, report target KL divergence, argmax changes, and numerical precision. Treat preprocessing as part of the predictor, and use compatible transforms for source training, unlearning, retraining, and final evaluation.

\paragraph{Report the full procedure and its cost.}
Influence-derived directions require guide gradients, Hessian-vector products, and candidate mixed derivatives. Include these costs in the reported unlearning or attack time. A reproducible comparison must also identify the checkpoint, forget indices, retraining reference, guide and candidate sets, direction sign and normalization, raw and applied norms, matched controls, intermediate and final metrics, membership-inference configuration, and the predeclared threshold for practical improvement.

\section{Limitations and Broader Impact}
\label{sec:limitations}
\paragraph{Experimental scope.}
Our main CIFAR-10 validation uses three paired source/forget seeds with frozen configurations. Across these runs, the direct and warm-start decisions from seed 1 are stable, whereas the utility decision is not: one of three seeds passes the utility gate. Three seeds cannot separate a small real effect from seed noise in that role, and a broader study would be needed to characterize variability across training initializations and deletion requests. We report the paired outcomes together with descriptive sample standard deviations. The Tiny ImageNet experiment uses a single source--forget pair. Both datasets also focus on image classification with random 10\% instance deletion. Other deletion settings, including class deletion and difficult or peripheral forget sets, as well as generative and language models, may exhibit different behavior~\citep{zhao2024hard}.
\paragraph{Approximate unlearning.}
The procedures studied here are approximate unlearning methods rather than certified deletion mechanisms. Our accuracy and membership-based audits measure agreement with a matched retraining reference, but they cannot establish complete removal of information associated with forgotten samples. For applications that require formal deletion guarantees, empirical agreement with retraining would need to be complemented by stronger auditing or certified mechanisms.
\paragraph{Influence approximation.}
The influence calculation uses a damped inverse-Hessian vector product with a finite number of conjugate-gradient iterations in a nonconvex network, and the analysis in \cref{sec:analysis} is local. Extending the diagnostic solve from 20 to 40 and 80 CG iterations changes the relative residual only slightly (0.985, 0.984, and 0.980), while the cosine between the 20- and 80-iteration input directions is 0.219. The tested solve is therefore far from convergence, and the direction evaluated in \cref{sec:experiments} is the one produced by the released solver at its default budget rather than a faithful estimate of the influence direction in \cref{prop:retrain-direction}. Our negative results should be read at that level: they show that this computable direction does not fill the three roles, not that a converged influence direction could not. Increasing the CG budget from 20 to 80 iterations did not monotonically improve immediate forget-minus-retain loss separation, but three budgets are too few to draw conclusions about the converged limit. Alternative curvature approximations, such as spectrally guided LiSSA~\citep{klochkov2024ihvp}, and layer-restricted variants may lead to different behavior and are natural directions for further study.
\paragraph{Tuning breadth.}
The DPL search covered direction sign, signed versus normalized representation, $\alpha\in\{1,2,4,8\}/255$, and CG budget on seed 1, with the best configuration selected optimistically and then frozen. The utility weight $\lambda_u=0.1$, the warm-start initial norm $2/255$, and the guide and candidate set sizes were fixed and not tuned for DPL, and no architecture- or dataset-specific hyperparameters were adjusted for Tiny ImageNet. A wider search could favor DPL in the utility and warm-start roles, where the observed differences are small; it is unlikely to change the direct-role outcome, where the gap to SalUn and AMUN is large.
\paragraph{Implementation-specific evaluation details.}
The original seed-1 CIFAR-10 warm-start sweep follows the normalization and clamping behavior of the matched repository. In the final three-seed validation, we additionally use a strictly matched raw-space attack path and recover the same qualitative warm-start conclusion. For Tiny ImageNet, the official evaluation path omits source-training normalization before the model forward pass, so we report both the official path and a raw-forward-fixed variant. Exact online Tiny ImageNet RMIA could not be reproduced because the corresponding 128-model reference matrix was unavailable.
\paragraph{Additional settings.}
The inactive label intervention is observed under the float32 target scale used in our experiments. Rescaled soft targets, temperature calibration, or higher-precision implementations may change this behavior. We also leave several complementary directions open, including gradient-only influence surrogates, game-based unlearning quality metrics~\citep{tu2025reliable}, per-sample likelihood audits~\citep{naderloui2025rectifying}, and class-wise or worst-case forget sets. Evaluating these settings would help establish how broadly the empirical patterns reported here extend beyond the current benchmark configuration.
\paragraph{Broader impact.}
The experiments use standard image-classification benchmarks and do not involve newly collected personal data or human participants. Machine unlearning is often motivated by privacy rights, but an empirical unlearning score should not be represented to users as proof of legal or semantic erasure. The main ethical recommendation of this study is conservative reporting: distinguish deletion guarantees from behavioral audits, expose failed controls, and avoid converting implementation artifacts into privacy claims.
\section{Conclusion}
\label{sec:conclusion}

We evaluated influence-derived data perturbations as a direct forgetting mechanism, a retained-utility regularizer, and an initialization for adversarial unlearning. A mixed input-parameter derivative can define a first-order direction for matching an approximate retraining displacement, but this justification is local and requires consistent perturbation coordinates.

In the seed-1 diagnostic, direct DPL remains farther from exact retraining than SalUn and AMUN. With frozen configurations across three paired CIFAR-10 source/forget seeds, direct DPL fails the matched direct-role gate in every seed. Retained-utility regularization passes in one seed and does not provide a reliable advantage over clean retained-data cross-entropy. Under the strict raw-space comparison, DPL warm starting fails to outperform both official and FGSM initialization in every seed once direction-construction cost is included. The tested label perturbation produces no measurable float32 target change.

The one-seed Tiny ImageNet check also provides no advantage for utility regularization or warm starting. Its direct-role comparison is inconclusive because the source model and downstream evaluator use inconsistent preprocessing. These experiments concern random instance deletion and do not establish failure in other deletion regimes. Within this scope, evaluate an influence-derived direction at the endpoint of its claimed role and against the corresponding matched control.

\bibliographystyle{tmlr}
\bibliography{references}

\appendix
\section{Additional Experimental Details}
\label{app:details}

\subsection{Baseline output organization}
The matched codebase uses the same original model checkpoints, forget-index files, retained and test splits, and evaluation entry points for retraining, FT, GA, SalUn, AMUN, and DPL. The detailed mechanistic setting is source seed 1 and forget-index seed 1; the final frozen validation additionally uses paired source/forget seeds $(10,10)$ and $(100,100)$ with exact-seed retraining references. Each random 10\% request contains 5,000 CIFAR-10 training examples, leaving 45,000 retained examples.

The wider aggregate in \cref{tab:baseline-screen} pools the nine source-seed and forget-index combinations available for approximate baselines. Retraining is pooled over three source seeds because the retrained checkpoint depends on the retained split associated with the matched protocol. We do not attach standard errors to this table because the combinations share source checkpoints and are not independent replicates.

\subsection{Direct diagnostic configurations}
The diagnostic sweep tested image-only, label-only, combined image and label, and iterative recomputation configurations. Candidate and guide caps were 5,000, maximum CG iterations were 20, and the short-update learning rate was 0.003. The selected ``Best DPL direct'' row in \cref{tab:direct} is the checkpoint with the lowest average discrepancy among the completed direct diagnostic checkpoints. This selection is favorable to DPL and should not be interpreted as a held-out model choice.

\subsection{Direction validation subsets}
The immediate and short-update direction checks use fixed 1,024-example forget, retain, and test diagnostic subsets. Direction construction still uses the full 5,000 guide and candidate sets. We save the diagnostic indices and direction metadata so that positive and negative directions use the same examples.

\subsection{Tiny ImageNet external-check protocol}
The Tiny ImageNet check uses VGG-19 with batch normalization, source seed 1, forget-index seed 1, and 10,000 forgotten examples from a 100,000-example training set. Source and exact-retrain training use 201 epochs of SGD with initial learning rate 0.1, momentum 0.9, weight decay $5\times10^{-4}$, and a step scheduler with period 40 and factor 0.1. The matched approximate-unlearning arms start from the source checkpoint and use the repository's 10-epoch Tiny ImageNet recipes. DPL signs, raw-space scales, CG iterations, utility weight, and the three-step warm-start budget are fixed before this check and are not tuned on Tiny ImageNet.

Source training applies random crop, horizontal flip, and channel normalization, and validation applies the same normalization without spatial augmentation. The official downstream Tiny ImageNet path uses raw tensors for model forward. The raw-forward-fixed AMUN arm changes only this point by applying the source-training normalization differentiably before model forward; attack projection and clipping remain in raw $[0,1]$ coordinates. The source training process was resumed from complete periodic checkpoints after external interruptions. Model and optimizer state, schedule, split, and epoch count were preserved, but the augmentation and sampler RNG stream is not bitwise identical to an uninterrupted run. Exact online RMIA is omitted because the required official 128-model reference matrix is unavailable.

\subsection{Membership statistics}
The SVC confidence statistic is inherited from the AMUN evaluation pipeline. RMIA FT-AUC compares the forget and test distributions under the available reference-model protocol. We do not interpret either statistic monotonically in isolation. We compare each statistic with the exact retrained reference and include it in average discrepancy.

\section{Predeclared CIFAR-10 Gates}
\label{app:gates}

\subsection{Direct role}
The direct role was considered promising only if it moved forget behavior toward retraining while maintaining retain accuracy at least 99\%, test accuracy at least 93.5\%, positive forget-retain separation, and average discrepancy below the matched AMUN reference. No corrected configuration met these conditions.

\subsection{Utility role}
The DPL utility arm had to satisfy all of the following relative to clean retain CE: forget accuracy within 0.25 percentage points, no decrease in retain accuracy, no decrease in test accuracy, a strict improvement in at least one utility metric, and no worse SVC confidence gap to retraining. In the detailed seed-1 comparison it failed because retain accuracy decreased and retain loss increased. Under the frozen three-seed validation, seed 10 passes while seeds 1 and 100 do not.

\subsection{Warm-start role}
Relative to full official AMUN, a DPL few-step arm in the original seed-1 sweep required forget accuracy within 0.25 points, retain and test accuracy within 0.10 points, average discrepancy and SVC gap within 0.15, and at least 30\% total runtime reduction. It also had to beat official and FGSM initialization at the same number of steps by a practical threshold. No DPL arm passed all quality conditions. The final frozen validation separately repeats the three-step initialization comparison in strict raw-image coordinates for all three paired seeds; no seed passes against both simple initializations.

\section{Final Cross-Seed Validation Diagnostics}
\label{app:final-validation}

\subsection{Strict raw-space warm-start metrics}
\Cref{tab:rawspace-warm-app} gives the complete three-step raw-space comparison used in the final warm-start decision. RMIA FT-AUC is reported directly rather than interpreted monotonically; the matched exact-seed reference is used for the role gate.

\begin{table}[h]
\caption{Strict raw-space three-step warm-start validation. Total time includes DPL direction construction when applicable.}
\label{tab:rawspace-warm-app}
\centering
\scriptsize
\resizebox{\textwidth}{!}{\begin{tabular}{rllrrrrrr}
\toprule
Seed & Init. & Forget & Retain & Test & SVC gap & RMIA FT-AUC & Avg. Diff & Time (s) \\
\midrule
1 & Official & 95.44 & 99.60 & 93.37 & 0.84 & 0.4997 & 0.779 & 181.9 \\
1 & FGSM & 96.02 & 99.64 & 93.23 & 1.14 & 0.5006 & 1.026 & 189.7 \\
1 & DPL & 95.84 & 99.60 & 93.50 & 1.34 & 0.5001 & 0.974 & 244.9 \\
\addlinespace
10 & Official & 95.04 & 99.63 & 93.43 & 0.22 & 0.5017 & 0.408 & 178.5 \\
10 & FGSM & 94.76 & 99.61 & 93.38 & 0.50 & 0.5003 & 0.515 & 183.9 \\
10 & DPL & 94.84 & 99.55 & 93.26 & 0.12 & 0.5023 & 0.444 & 231.1 \\
\addlinespace
100 & Official & 95.60 & 99.60 & 93.33 & 2.20 & 0.5006 & 1.114 & 175.9 \\
100 & FGSM & 95.26 & 99.67 & 93.55 & 0.98 & 0.4966 & 0.651 & 178.5 \\
100 & DPL & 95.52 & 99.62 & 93.57 & 1.30 & 0.4994 & 0.804 & 238.3 \\
\bottomrule
\end{tabular}
}
\end{table}

\subsection{Direction alignment}
The coordinate-valid corrected normalized and signed directions have mean per-sample cosine 0.653 and median 0.659 on 1,024 examples; 99.7\% exceed cosine 0.5. The original legacy-to-corrected comparison is undefined because the original execution did not persist the legacy direction tensor and per-sample crop/flip metadata needed for a raw-coordinate pullback. We do not impute this missing comparison.

\begin{table}[h]
\caption{Coordinate-valid corrected-direction alignment. The legacy comparison is omitted because it cannot be reconstructed without missing augmentation metadata.}
\label{tab:alignment-app}
\centering
\small
\begin{tabular}{lrrrrr}
\toprule
Comparison & Global cos. & Mean cos. & Median cos. & $P(\cos>0.5)$ & $n$ \\
\midrule
Normalized vs.\ signed corrected direction & 0.653 & 0.653 & 0.659 & 0.997 & 1024 \\
\bottomrule
\end{tabular}

\end{table}

\subsection{CG stability}
The fixed 20-iteration CG solve is not numerically converged under the diagnostic used here. Increasing the budget changes the recovered direction substantially, but does not monotonically strengthen the immediate forget-minus-retain loss separation.

\begin{table}[h]
\caption{CG stability diagnostic. Cosines are measured against the 80-iteration diagnostic solution; longer runs are diagnostics only and are not used to reselect the final DPL configuration.}
\label{tab:cg-app}
\centering
\scriptsize
\resizebox{\textwidth}{!}{\begin{tabular}{rrrrrrr}
\toprule
CG iters & Time (s) & Rel. residual & Soln.\ cos. vs.\ 80 & Input cos. vs.\ 80 & Loss sep. & HVP calls \\
\midrule
20 & 49.8 & 0.985 & 0.325 & 0.219 & 0.0488 & 840 \\
40 & 94.9 & 0.984 & 0.628 & 0.409 & 0.0187 & 1640 \\
80 & 181.6 & 0.980 & 1.000 & 1.000 & 0.0267 & 3240 \\
\bottomrule
\end{tabular}
}
\end{table}

\subsection{Reproducibility records}
The manuscript archive includes the final-validation summary CSVs and the supplied audit, runner, status, and artifact-manifest records under \texttt{data/} and \texttt{supplementary/final\_validation/}. These records identify the paired seeds, configuration hashes, checkpoint hashes, forget-index hashes, commands, and result files used for the tables above. Raw model checkpoints and the reference-model bank are not duplicated in the manuscript archive.

\section{Additional Numerical Audit}
\label{app:numerics}

\subsection{Label direction statistics}
The full candidate label direction had shape $5000\times10$, minimum $-3.266986\times10^{-5}$, maximum $2.227792\times10^{-5}$, standard deviation $2.619885\times10^{-6}$, $\ell_2$ norm $5.85818\times10^{-4}$, and $\ell_\infty$ norm $3.266986\times10^{-5}$. At $\beta=0.007$, the measured target KL and target-coordinate changes were zero. At $4\beta$, the largest target-coordinate change was approximately $1.06\times10^{-8}$ and the KL was no larger than $1.03\times10^{-11}$.

\subsection{Utility direction statistics}
The utility direction used 5,000 nonzero candidate perturbations. Its raw aggregate norms were $\ell_1=25676.99$, $\ell_2=61.263$, and $\ell_\infty=5.976$. After normalization and raw-space application, the aggregate perturbation norms were $\ell_1=1362.45$, $\ell_2=0.554$, and $\ell_\infty=0.00273$. The fraction of pixels clipped below zero was 0.118\%, and the fraction clipped above one was 0.493\%.

\subsection{Warm-start coordinate diagnostic}
The DPL raw initialization had mean per-sample $\ell_2$ norm $2/255=0.007843$. After division by CIFAR standard deviations, its mean attack-space norm was about 0.0391 under the repository convention. The subsequent clamp of normalized tensors to $[0,1]$ produced a much larger apparent post-projection distance. The same issue affected the FGSM path. We therefore treat the warm-start experiment as a matched repository comparison, not a strict raw-space threat-set study.


\end{document}